\theoremstyle{plain}
\newtheorem{theorem}{Theorem}[section]
\newtheorem{lemma}[theorem]{Lemma}
\newtheorem{assumption}[theorem]{Assumption}
\newtheorem{remark}[theorem]{Remark}
\newcommand{\bo}{bi-level optimization\xspace}
\newcommand{\Bo}{Bi-level optimization\xspace}
\newcommand{\btheta}{{\boldsymbol{\theta}}}
\newcommand{\bphi}{{\boldsymbol{\phi}}}
\newcommand{\bpsi}{{\boldsymbol{\psi}}}
\newcommand{\bOmega}{{\boldsymbol{\Omega}}}
\newcommand{\bA}{{\boldsymbol{A}}}
\newcommand{\bB}{{\boldsymbol{B}}}
\newcommand{\bZ}{{\boldsymbol{Z}}}
\newcommand{\bc}{{\boldsymbol{c}}}
\newcommand{\bd}{{\boldsymbol{d}}}
\newcommand{\bv}{{\boldsymbol{v}}}
\newcommand{\bx}{{\boldsymbol{x}}}
\newcommand{\yes}{{\textcolor[RGB]{51, 127, 123}{\ding{51}}}}
\newcommand{\no}{{\textcolor[RGB]{182, 46, 39}{\ding{55}}}}
\newcommand{\fgsu}{$\text{(FG)}^2$U\xspace}
\newcommand{\fgsuzo}{$\text{(FG)}^2$U-ZO\xspace}
\newcommand{\boldfgsu}{$\mathbf{\textbf{(FG)}^2}$\textbf{U}\xspace}
\title{Memory-Efficient Gradient Unrolling for\\ Large-Scale Bi-level Optimization}
\author{
  Qianli Shen$^{1}$\thanks{Correspondence: \href{mailto:shenqianli@u.nus.edu}{\texttt{shenqianli@u.nus.edu}}}\hspace{6mm}Yezhen Wang$^{1}$\hspace{6mm}Zhouhao Yang$^{1}$\hspace{6mm}Xiang Li$^{1}$\hspace{6mm}Haonan Wang$^{1}$
  \\[0.23ex]
  \textbf{Yang Zhang}$^{1}$\hspace{6mm}\textbf{Jonathan Scarlett}$^{1}$\hspace{6mm}\textbf{Zhanxing Zhu}$^{2}$\hspace{6mm}\textbf{Kenji Kawaguchi}$^{1}$
  \\[0.23ex]
  \small{$^{1}$National University of Singapore\hspace{3.3mm}
  $^{2}$University of Southampton, UK\hspace{3.3mm}}
}
\begin{document}

\maketitle

\begin{abstract}
\Bo (BO) has become a fundamental mathematical framework for addressing hierarchical machine learning problems.
As deep learning models continue to grow in size, the demand for scalable \bo has become increasingly critical.
Traditional gradient-based \bo algorithms, due to their inherent characteristics, are ill-suited to meet the demands of large-scale applications.
In this paper, we introduce \textbf{F}orward \textbf{G}radient \textbf{U}nrolling with \textbf{F}orward \textbf{G}radient, abbreviated as \textbf{\fgsu}, which achieves an unbiased stochastic approximation of the meta gradient for \bo.
\fgsu circumvents the memory and approximation issues associated with classical \bo approaches, and delivers significantly more accurate gradient estimates than existing large-scale \bo approaches.
Additionally, \fgsu is inherently designed to support parallel computing, enabling it to effectively leverage large-scale distributed computing systems to achieve significant computational efficiency.
In practice, \fgsu and other methods can be strategically placed at different stages of the training process to achieve a more cost-effective two-phase paradigm.
Further, \fgsu is easy to implement within popular deep learning frameworks, and can be conveniently adapted to address more challenging black-box \bo scenarios.
We provide a thorough convergence analysis and a comprehensive practical discussion for \fgsu, complemented by extensive empirical evaluations, showcasing its superior performance in diverse large-scale \bo tasks. 
Code is available at \url{https://github.com/ShenQianli/FG2U}.
\end{abstract}

\vspace{-0.3cm}
\section{Introduction}
\Bo is a mathematical framework with a long history of research~\cite{colson2007overview, sinha2017review, zhang2023introduction}, dealing with hierarchical optimization problems where one problem is nested within the other. 
A \bo problem can be formulated as:
\begin{equation}\label{eq:bo}
\setlength\abovedisplayskip{6pt}
\setlength\belowdisplayskip{6pt}
    \begin{aligned}
        \mathop{\min}_{\bphi}\  f(\btheta^*(\bphi), \bphi) \quad s.t.\  \btheta^*(\bphi)\in\mathop{\arg\min}_\btheta g(\btheta, \bphi),
    \end{aligned}
\end{equation}
where $\btheta \in \Theta \subseteq \mathbb{R}^M$ denotes the inner parameter, $\bphi \in \Phi \subseteq \mathbb{R}^N$ denotes the meta parameter, and $f$, $g$ are called the meta objective function and inner objective function, respectively.

Recently, with the rise of deep learning, \bo has regained attention as a theoretical framework covering a wide range of machine learning problems, including hyperparameter optimization~\cite{bgu, hyperparameter2, fgu, fgu_bgu, hyperparameter5}, neural architecture search~\cite{nas1, nas2_darts, nas3}, robust machine learning~\cite{adv1, hessian_free_1, dnn_watermarking, metapoison}, meta learning~\cite{maml, imaml_cg, reptile, learning2learn}, and physics-informed machine learning~\cite{bilevel_pinn, picprop}.
In these scenarios, the inner problem often pertains to the optimization of neural networks, thereby precipitating challenges associated with gradient-based \bo.
Consequently, various gradient-based \bo algorithms have been developed~\cite{zhang2023introduction}.
These algorithms typically employ an iterative solution $\theta_T$ obtained by executing multiple inner optimization steps to approximate the meta gradient, 
and provide different tradeoffs between computational costs and performance for meta gradient approximation.

However, as the scale of deep learning models continues to expand, the requirements for scalability in \bo correspondingly increase.
Existing gradient-based \bo algorithms, due to their inherent characteristics, are ill-suited to meet the demands of large-scale applications. Concretely, 
\textit{gradient unrolling} (GU) methods~\cite{fgu, fgu_bgu, gu_bda, tgu} are bottlenecked by the memory overhead associated with either the dimension of the inner parameter or the number of iterative steps for the inner problem.
Implicit Function (IF) approaches \cite{cg, neumann_1, singh2020woodfisher, hessian_free_1} are compromised by approximation errors, which stem from the iterative estimation of inner solutions and computations that involve the Hessian matrix. 
\textit{Value Function} (VF) based strategies~\cite{vf1, vf2_bome, vf3, vf4_f2sa},
although exhibit commendable theoretical properties~\cite{f2sa_theory} for deterministic \bo,
have yet to gain traction in practical applications, predominantly due to their limitations in addressing large-scale stochastic challenges~\cite{zhang2023introduction}.
Recent advancements in algorithms~\cite{tgu, choe2024sama} have been specifically tailored for large-scale \bo.
Although these methodologies facilitate efficient gradient approximation by compromising accuracy, they may result in significantly suboptimal performance due to biased gradient approximations.
Additionally, these methods struggle in more complex scenarios, such as when inner problems are addressed through black-box optimization.

\begin{figure}[t] 
\begin{minipage}[t]{0.67\textwidth}
    \tabcolsep=0.08cm
    \tiny
    \centering
    \vspace{-2.8cm}
    \begin{tabular}{cccccc}
    \toprule
        \multirow{2}{*}{Category} & \multirow{2}{*}{Method} & Constant & \multirow{2}{*}{Hessian-Free} & Stochastic & Grad. Approx.\\
                               & & Memory &                                & Optimization & Preference \\
    \midrule
        \multirow{3}{*}{Classical} & GU~\cite{fgu, fgu_bgu} & \no & \yes & \yes & - \\
        & IF~\cite{imaml_cg, neumann_1, singh2020woodfisher} & \yes & \no & \yes & - \\
        & VF~\cite{vf1, vf2_bome, vf3, vf4_f2sa} & \yes & \yes & \no & - \\
    \midrule
        \multirow{3}{*}{Large-scale} & TRGU~\cite{tgu} & \yes & \yes & \yes & Efficiency \\
        & Hessian-Free~\cite{hessian_free_1, hessian_free_2, choe2024sama} & \yes & \yes & \yes & Efficiency \\
        & \boldfgsu (ours) & \yes & \yes & \yes & Accuracy \\
    \bottomrule
    \end{tabular}
\end{minipage}
\begin{minipage}[t]{0.32\textwidth}
\centering 
\includegraphics[width=\linewidth]{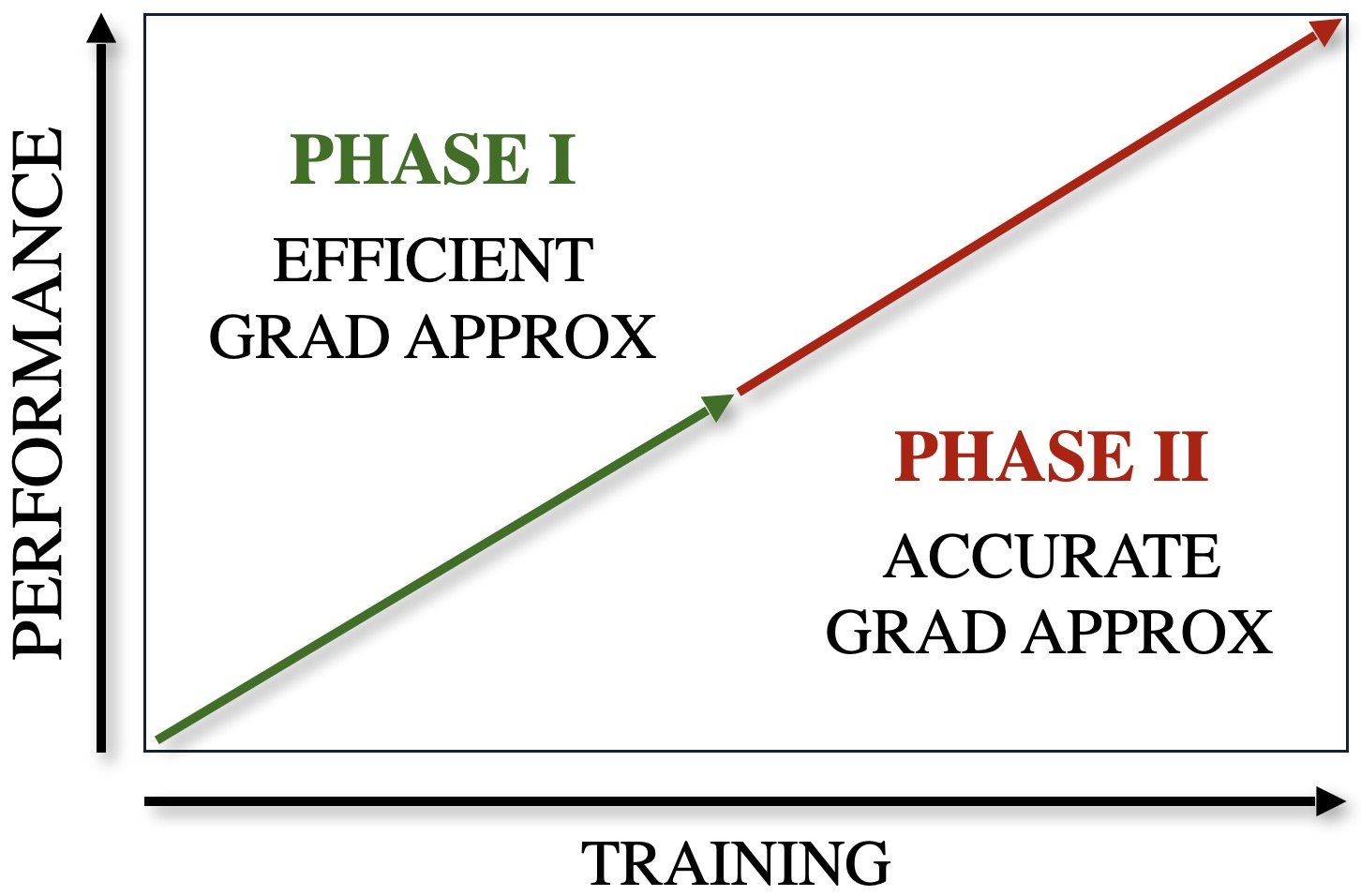}
\end{minipage}
\end{figure}

\begin{figure}[t] 
\vspace{-0.5cm}
\centering 
\includegraphics[width=1.0\linewidth]{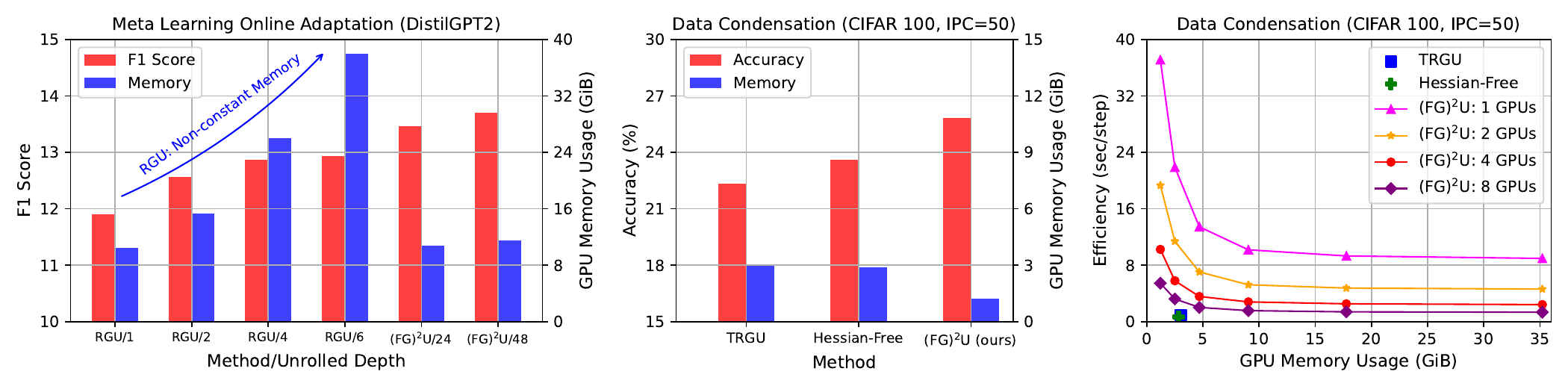}
\vspace{-0.7cm}
\caption{
\textbf{Top Left}: A comparison of \bo methods. \fgsu circumvents the large-scale challenges inherent in classical \bo techniques. 
Within large-scale \bo, \fgsu prioritizes the accuracy of gradient approximation over efficiency.
\textbf{Top Right}: An overview of the cost-effective two-phase paradigm. \fgsu is ideally positioned in Phase II to enhance performance after an approximate solution has been obtained using other efficient methods.
\textbf{Bottom Left}: GPU Memory Usage and Performance on \textit{Meta Learning Online Adaptation} experiment. 
\fgsu can effectively address the memory issue of RGU when both the inner model and the unrolled depth are large. 
\textbf{Bottom Center}: GPU Memory Usage and Performance on \textit{Data Condensation} experiments.
The performance of \fgsu surpasses that of other large-scale \bo methods, owing to its accurate gradient approximation, while demonstrating better memory efficiency.
\textbf{Bottom Right}: Efficiency tradeoff of \fgsu on \textit{Data Condensation} experiments.
The efficiency of \fgsu can be well enhanced via intra/inter-GPU parallelism.
}
\label{fig:overview}
\vspace{-0.65cm}
\end{figure}

In this paper, we propose a novel method called \textbf{F}orward \textbf{G}radient \textbf{U}nrolling with \textbf{F}orward \textbf{G}radient, abbreviated as \boldfgsu, which achieves an unbiased stochastic approximation of the meta gradient for \bo.
\fgsu circumvents the memory issues associated with GU-based approaches and approximation issues associated with IF-based approaches.
Compared to recently developed large-scale \bo approaches, \fgsu delivers significantly more accurate gradient estimates.
Additionally, \fgsu is inherently designed to support parallel computing, enabling it to effectively leverage large-scale distributed computing systems to achieve significant computational efficiency.
In practice, a cost-effective two-phase paradigm can be achieved by strategically placing \fgsu and other methods at different stages of the training process to balance efficiency and performance.
Further, \fgsu is easy to implement within popular deep learning frameworks, and can be conveniently adapted to address more challenging zeroth-order \bo scenarios.

We provide an overview of \fgsu in~\cref{fig:overview} to illustrate its strengths and role in large-scale \bo.
The rest of the paper is organized as follows. 
Firstly, in \cref{sec:background}, we provide summaries of existing bi-level optimization algorithms and discuss their limitations in large-scale contexts. 
Next, in \cref{sec:method}, we introduce the proposed method, \fgsu, followed by a convergence analysis in \cref{sec:convergence} and a detailed discussion of the practical considerations in \cref{sec:practical}. 
Further, in \cref{sec:experiment}, we conduct extensive empirical studies covering large-scale bi-level optimization in computer vision, natural language processing, and physics-informed machine learning to demonstrate the efficacy of \fgsu in large-scale \bo scenarios. 

\section{Background}\label{sec:background}


\textbf{Gradient-based Bi-level Optimization.} \ 
Within deep learning applications, the model concerned with optimizing over 
$\btheta$ as presented in~\eqref{eq:bo} typically constitutes deep neural networks. 
The optimal parameters of such networks are not explicitly accessible and are estimated through iterative procedures.
Consequently, the primal problem of \bo in \eqref{eq:bo} is approximately reformulated as follows:
\begin{gather}\label{eq:T-step_objective}
        \mathop{\min}_{\bphi\in \Phi}\  h(\bphi):=f(\btheta_T(\bphi), \bphi),\\
        \mathrm{where} \ \  \btheta_0(\bphi) = \boldsymbol{\Omega}_0(\bphi), \ \ \btheta_t(\bphi) = \boldsymbol{\Omega}_t(\btheta_{t-1}(\bphi), \bphi)\in\Theta,\ \ t = 1,\dots,T,\notag
\end{gather}
where $\Phi\subseteq \mathbb{R}^N$, $\Theta\subseteq \mathbb{R}^M$ are the parameter spaces; $T$, commonly called the unrolled depth, denotes the number of inner optimization steps for approximating $\btheta^*(\bphi)$;
$\boldsymbol{\Omega}_0: \mathbb{R}^N \rightarrow \mathbb{R}^M$ specifies the initialization of the inner optimization,   
and $\boldsymbol{\Omega}_t: \Theta \times \Phi \rightarrow \Phi$ delineates the transition dynamics of the inner optimization at timestep $t$.
In particular, for gradient descent, $\boldsymbol{\Omega}_t(\btheta_{t-1}(\bphi), \bphi) = \btheta_{t-1} - \eta_t \nabla_{\btheta} g(\btheta_{t-1}, \bphi)$, where $\eta_t$ denotes the step size at timestep $t$.

To optimize $\bphi$ using a first-order method, it is necessary to estimate the meta gradient $\nabla_\bphi h$, which can be further decomposed according to the chain rule:
\begin{equation}
    \underbrace{\nabla_\bphi h(\bphi)}_{\text{meta gradient}} = \underbrace{\frac{\partial f(\btheta_T(\bphi), \bphi)}{\partial \btheta_T} \frac{d\btheta_T(\bphi)}{d\bphi}}_{\text{implicit gradient}} + \underbrace{\frac{\partial f(\btheta_T(\bphi), \bphi)}{\partial \bphi}}_{\text{explicit gradient}}.
\end{equation}
The computation of meta-gradient poses a significant challenge, primarily due to the need for efficient approximation of the implicit gradient.
This task is complicated by the recursive dependency of \(\btheta_T\) on \(\bphi\). 
To surmount this challenge, a variety of gradient-based \bo algorithms have been developed, as extensively reviewed recently in~\cite{zhang2023introduction}.
These algorithms can be fundamentally categorized into three types based on their approach to meta-gradient approximation: \textit{Gradient Unrolling} (GU), \textit{Implicit Function} (IF), and \textit{Value Function} (VF).
Recent innovations such as truncated RGU (TRGU)~\cite{tgu} and Hessian-Free approaches~\cite{hessian_free_1, hessian_free_2, choe2024sama}, which are predicated on GU and IF methodologies respectively, have introduced significant biases in their approximations to accommodate the computational constraints of large-scale scenarios.
In the subsequent paragraph, we furnish a concise overview of GU-based approaches, addressing their non-constant memory issues in large-scale applications.
Extended discussions on the remaining methods are reserved for~\cref{appendix:extended_discussion}.

\textbf{Gradient Unrolling.} \ 
The core idea behind GU~\cite{fgu, fgu_bgu, gu_bda, tgu}  entails unrolling the inner optimization into an expansive computational graph, followed by the employment of automatic differentiation (AD) techniques for the iterative computation of gradients. 

\textit{Forward Gradient Unrolling} (FGU)~\cite{fgu, fgu_bgu}
computes the meta gradient using the following forward recursive formula, 
starting from $\bZ_0 = \frac{d \boldsymbol{\Omega}_0(\bphi)}{d\bphi}$:
\begin{equation}\label{eq:fgu}
\setlength\abovedisplayskip{5pt}
\setlength\belowdisplayskip{4pt}
    \underbrace{\frac{d \btheta_t(\bphi)}{d\bphi}}_{\bZ_t} = \underbrace{\frac{\partial \boldsymbol{\Omega}_t(\btheta_{t-1}(\bphi), \bphi)}{\partial \btheta_{t-1}}}_{\bA_t} \underbrace{\frac{d \btheta_{t-1}(\bphi)}{d\bphi}}_{\bZ_{t-1}} + \underbrace{\frac{\partial \boldsymbol{\Omega}_t(\btheta_{t-1}(\bphi), \bphi)}{\partial \bphi}}_{\bB_t}, \  t=1,\dots,T,
\end{equation}
\textit{Reverse Gradient Unrolling} (RGU)~\cite{bgu, fgu_bgu}, instead of the employment of explict reccursive formulas of $\bZ_T$, focuses on the implicit reccursive formulas of $\nabla_\bphi h$:
\begin{equation}\label{eq:bgu}
\setlength\abovedisplayskip{6pt}
\setlength\belowdisplayskip{4pt}
\begin{aligned}
    \nabla_\bphi h(\bphi) =& \underbrace{\frac{\partial f(\btheta_T(\bphi), \bphi)}{\partial \btheta_T}}_{\bd_T} \underbrace{\frac{d\btheta_T(\bphi)}{d\bphi}}_{\bZ_T} + \underbrace{\frac{\partial f(\btheta_T(\bphi), \bphi)}{\partial \bphi}}_{\bc_T} \\
    =& \bd_T \bZ_T + \bc_T \mathop{=}^{\eqref{eq:fgu}} \underbrace{\bd_T \bA_T}_{\bd_{T-1}} \bZ_{T-1} + \underbrace{\bd_T\bB_T + \bc_T}_{\bc_{T-1}} = \cdots = \bd_0 \bZ_0 + \bc_0.
\end{aligned}
\end{equation}
The corresponding reverse recursive formulas can thus be summarized as
\begin{equation}\label{eq:bgu_2}
\setlength\abovedisplayskip{6pt}
\setlength\belowdisplayskip{4pt}
    \bc_{t-1} = \bc_{t} + \bd_t\bB_t, \quad \bd_{t-1} = \bd_t\bA_t, \quad t=T,\dots,1.
\end{equation}

\textbf{Weakness (GU): Non-Constant Memory}. \
Both GU approaches exhibit a non-constant memory overhead, which constrains their utility in large-scale scenarios.
The forward reccursive formulas in~\eqref{eq:fgu} revolve around the Jacobian matrix product,  demanding $\mathcal{O}(MN)$ space consumption. 
The reverse recursive formulas in~\eqref{eq:bgu_2} necessitate the storage of the entire trajectory of the inner optimization $\btheta_{0:T}$ for backward computation, thereby imposing a memory requirement of $\mathcal{O}(TM)$.
These requirements are often impractical for large-scale \bo, when $\bphi$ and $\btheta$ are of high dimension and a significant unrolled depth is required.


\textbf{Forward Gradient.} \ 
Forward-mode automatic differentiation (forward-mode AD) has been applied to a variety of research fields, including the training of recurrent neural networks \cite{WilliamsZ89}, the computation of Hessian vector products \cite{Pearlmutter94}, etc. 
However, the computation of the true gradient via forward-mode AD requires the full Jacobian, which is typically too costly to compute. 

To solve this, forward gradient learning \cite{Wengert64, baydin_fg, SilverGDHH22, baydin_fg, RenKLH23}, built upon forward-mode AD, was proposed. 
Forward gradient methods update parameters based on the directional gradient along a random perturbation direction for backpropagation-free training.
More formally, given a differentiable function $h: \mathbb{R}^N \rightarrow \mathbb{R}$, the gradient for a given input $\bphi \in \mathbb{R}^N$ can be approximated as
\begin{equation}\label{eq:fg}
\setlength\abovedisplayskip{4pt}
\setlength\belowdisplayskip{6pt}
    \hat{\nabla} h(\bphi) = \nabla h(\bphi) \bv \bv^T,
\end{equation}
where $\bv \sim p(\bv)$ is a $N$-dimensional multivariate random variable, satisfying $\mathbb{E}[\bv\bv^T] = \mathbf{I}$.
Common choices of the  distribution of $\bv$ include Rademacher $\bv \sim \mathrm{Unif}(\{-1, 1\}^N)$, Gaussian $\bv \sim \mathcal{N}(\boldsymbol{0}, \boldsymbol{I})$, and uniform distribution over a set of normalized orthogonal coordinates $\bv \sim \mathrm{Unif}(\{\sqrt{N}\boldsymbol{e}_i\}_{1:N})$.
For any given $\bphi$, $\hat{\nabla} h(\bphi)$ is an unbiased estimator of $\nabla h(\bphi)$, as $\mathbb{E} [\hat{\nabla} h(\bphi)] = \mathbb{E} [\nabla h(\bphi) \bv\bv^T] = \nabla h(\bphi) \mathbb{E} [\bv\bv^T] = \nabla h(\bphi) \mathbf{I} = \nabla h(\bphi)$.
Despite the unbiasedness of $\hat{\nabla} h$, the dimension-dependent variance of the estimated gradient with a single direction impedes the scaling-up to high-dimensional problems.
In practice, Monte Carlo gradient estimation can be used via averaged forward gradients over multiple random directions to reduce the variance.



\section{\fgsu: Forward Gradient Unrolling with Forward Gradient}\label{sec:method}

We aim to circumvent the memory overhead issues associated with forward gradient unrolling (FGU) as discussed in~\cref{sec:background}.
We begin by examining the forward gradient of $h$ at $\bphi$,

\begin{equation}\label{eq:fg_apply}
    \begin{aligned}
        \hat \nabla h(\bphi) = \nabla h(\bphi) \bv \bv^T \mathop{=}^{\eqref{eq:bgu}} (\bd_T \bZ_T \bv + \bc_T \bv ) \bv^T,
    \end{aligned}
\end{equation}
where $\bv \sim p(\bv)$ is a $N$-dimensional multivariate random variable, satisfying $\mathbb{E}[\bv\bv^T] = \mathbf{I}$.
We follow the idea of FGU introduced in~\cref{sec:background} to compute $\bZ_T \bv$. 
By multiplying both sides of \eqref{eq:fgu} by $\bv$ on the right, we can obtain the recursive formulas for $\bZ_t\bv$ as
\begin{equation}\label{eq:fg_square_u_recursive}
\setlength\abovedisplayskip{6pt}
\setlength\belowdisplayskip{4pt}
         \bZ_0 \bv = \bB_0 \bv;\quad
         \bZ_{t}\bv = \bA_t \bZ_{t-1} \bv + \bB_t \bv, \ \ t=1,\dots,T.
\end{equation}

The revised recursive formulas in~\eqref{eq:fg_square_u_recursive} facilitate the tracking of a $M$-dimensional vector $\bZ_t\bv$, rather than full Jacobian $\bZ_t$ of size $M \times N$, throughout the forward pass.
The stochastic estimation in~\eqref{eq:fg_apply} is unbiased, adhering to the properties of forward gradient methods.
To reduce the variance, we use Monte Carlo estimate via averaged forward gradients over $b$ \emph{i.i.d.} random directions:
\begin{equation}
\setlength\abovedisplayskip{4pt}
\setlength\belowdisplayskip{3pt}
    \label{eq:forward_gradient_batch}
    \hat{\nabla} h(\bphi) = \frac{1}{b}\sum_{i=1}^b \nabla h(\bphi) \bv_i {\bv_i}^T = \frac{1}{b}\sum_{i=1}^b (\bd_T \bZ_T \bv_i + \bc_T \bv_i ) {\bv_i}^T.
\end{equation}

We call this algorithm $\mathbf{\textbf{(FG)}^2}$\textbf{U}, as an abbreviation of \textbf{F}orward \textbf{G}radient \textbf{U}nrolling with \textbf{F}orward \textbf{G}radient.
The algorithm is summarized in~\cref{appendix:alg} as~\cref{alg:(FG)^U}.

Compared to GU-based methods, as discussed in~\cref{sec:background}, \fgsu eliminates the dependency on the meta parameter dimension $N$ and the depth of unrolling $T$ without introducing bias, significantly enhancing memory efficiency. 
Unlike IF-based methods, as discussed in~\cref{appendix:if}, \fgsu overcomes the approximation issues associated with them while maintaining a constant memory overhead, thus providing superior gradient approximation.
Compared to TRGU and Hessian-Free methods, which compromise approximation accuracy for efficiency, \fgsu consistently delivers accurate gradient approximations.
The computational efficiency of \fgsu can be further enhanced by leveraging large-scale distributed computing resources, capitalizing on its inherently parallelizable formulation as presented in~\eqref{eq:forward_gradient_batch}.
In practice, a more cost-effective two-phase paradigm can be achieved by strategically placing \fgsu and other methods at different stages of the training process, as we will discuss in~\cref{sec:practical}.
For an illustration of the role of \fgsu in large-scale bi-level optimization, please refer to~\cref{fig:overview}.

\subsection{Convergence}\label{sec:convergence}
In this section, we provide a convergence analysis for \fgsu. The proofs can be found in~\cref{appendix:theory}.

First, we establish a bound on the variance of the estimated gradient, when employing random vectors whose entries follow the Rademacher distribution.
\begin{lemma}\label{lemma:variance}
For any $\bphi \in \Phi$, if $\bv_i\sim \mathrm{Unif}(\{-1, 1\}^N)$, the gradient estimation in~\eqref{eq:forward_gradient_batch}, satisfies
\begin{equation}\nonumber
\setlength\abovedisplayskip{3pt}
\setlength\belowdisplayskip{3pt}
    \mathbb{E} \|\hat{\nabla}h(\bphi) - \nabla h(\bphi)\|^2  = \frac{1}{\rho} \|\nabla h(\bphi)\|^2,
\end{equation}
where $\rho := \frac{b}{N-1} \in (0, 1]$ as the sample size $b$ is selected from $1,\cdots, N-1$.
\end{lemma}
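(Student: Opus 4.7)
The plan is to exploit the fact that $\hat\nabla h(\bphi)$ is an average of $b$ i.i.d.\ unbiased estimators, so the second moment of the error reduces to a single-sample variance computation. Writing $\bg := \nabla h(\bphi)$ (as a row vector) and $X_i := \bg\bv_i\bv_i^T$, each $X_i$ is unbiased by the paper's earlier observation $\mathbb{E}[\bv\bv^T]=\mathbf{I}$, hence $\mathbb{E}\|\hat\nabla h(\bphi)-\bg\|^2 = \tfrac{1}{b}\mathbb{E}\|X_1-\bg\|^2$. So everything comes down to evaluating $\mathbb{E}\|\bg\bv\bv^T - \bg\|^2$ for a single Rademacher vector $\bv$.

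For this single-sample computation, I would expand the squared norm:
\begin{equation*}
\|\bg\bv\bv^T-\bg\|^2 = \|\bg\bv\bv^T\|^2 - 2\,\bg\bv\bv^T\bg^T + \|\bg\|^2.
\end{equation*}
Since $\bg\bv$ is a scalar, the first term equals $(\bg\bv)^2\|\bv\|^2$, and the middle term equals $2(\bg\bv)^2$. The decisive step is that for Rademacher entries $\|\bv\|^2 = N$ \emph{deterministically}, so the first term simplifies to $N(\bg\bv)^2$ with no extra expectation needed. Combining gives
\begin{equation*}
\|\bg\bv\bv^T-\bg\|^2 = (N-2)(\bg\bv)^2 + \|\bg\|^2.
\end{equation*}

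Taking expectations and using $\mathbb{E}[(\bg\bv)^2] = \bg\,\mathbb{E}[\bv\bv^T]\,\bg^T = \|\bg\|^2$ yields $\mathbb{E}\|X_1-\bg\|^2 = (N-1)\|\bg\|^2$. Dividing by $b$ and substituting $\rho = b/(N-1)$ gives the claimed identity $\frac{1}{\rho}\|\nabla h(\bphi)\|^2$.

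I do not anticipate a serious obstacle: the computation is a few lines. The only subtlety worth flagging is that the clean equality (not just an inequality) relies crucially on the Rademacher structure through the deterministic identity $\|\bv\|^2=N$; for Gaussian $\bv$ one would instead have $\mathbb{E}\|\bv\|^2 = N$ plus extra fourth-moment contributions, producing a different constant. I should also briefly justify the i.i.d.\ reduction in the first step, noting that cross terms $\mathbb{E}[(X_i-\bg)(X_j-\bg)^T]$ vanish for $i\neq j$ by independence and unbiasedness, which is standard but worth stating for completeness.
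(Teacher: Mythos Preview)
Your proposal is correct and follows essentially the same approach as the paper's proof: both reduce to the single-sample variance via the i.i.d.\ structure, expand $\|\bg\bv\bv^T-\bg\|^2$, use $\mathbb{E}[(\bg\bv)^2]=\|\bg\|^2$ together with $\|\bv\|^2=N$ for Rademacher $\bv$, and arrive at $(N-1)\|\bg\|^2$ before dividing by $b$. Your write-up is arguably slightly cleaner in explicitly flagging that $\|\bv\|^2=N$ holds deterministically (the paper states the looser $\mathbb{E}\|\bv\bv^\top\|^2=N$), but the substance is identical.
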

The resultant error is bounded by $O\left(\frac{N-1}{b}\right)$, where $b$ represents the sample size used for computing the forward gradient, and $N$ is the dimensionality of the gradient itself. 
This bound demonstrates how the error scales inversely with the sample size while also being influenced by the gradient's dimensionality.

Next, we lay down the following assumptions, on which our main theorems are based. 
Let $\bpsi = (\btheta, \bphi) \in \Theta \times \Phi$ denote the combination of the lower-level parameter $\btheta$ and the meta parameter $\bphi$. Following existing papers on the theory of bilevel optimization \cite{hyperparameter5,tgu,stocbio}, in Assumption \ref{assumption:fg_lipchitz_smooth}, we adopt some standard assumptions over the smoothness of the objective functions $f$ and $g$.

\begin{assumption}
\label{assumption:fg_lipchitz_smooth}
    The meta objective function $f(\bpsi)$ and the lower-level objective function $g(\bpsi)$ are both $C$-Lipschitz and $L$-smooth, \emph{i.e.}, for any $\bpsi, \bpsi' \in \Theta\times\Phi$,
        \begin{align}
        |f(\bpsi) - f(\bpsi')| \le C \|\bpsi - \bpsi'\|, & \quad \|\nabla f(\bpsi) - \nabla f(\bpsi')\| \le L\|\bpsi - \bpsi'\|, \\
        |g(\bpsi) - g(\bpsi')| \le C \|\bpsi - \bpsi'\|, & \quad \|\nabla g(\bpsi) - \nabla g(\bpsi')\| \le L\|\bpsi - \bpsi'\|.
    \end{align}
\end{assumption}

The next assumption regulates that the transition functions $\bOmega$ satisfy similar smoothness conditions.  
\begin{assumption}
\label{assumption:omega_lipchitz}
    The transition functions $\bOmega_{0:T}$ are $C_\bOmega$-Lipschitz and $L_\bOmega$-smooth, \emph{i.e.}, for any $\bphi, \bphi'\in\Phi$, 
    \begin{equation}
\setlength\abovedisplayskip{4pt}
\setlength\belowdisplayskip{4pt}
 \|\bOmega_0(\bphi) - \bOmega_0(\bphi')\| \le C_\bOmega \|\bphi - \bphi'\|, \ \|\nabla\bOmega_0(\bphi) - \nabla\bOmega_0(\bphi')\| \le L_\bOmega \|\bphi - \bphi'\|.
 \end{equation}
    For any  $\bpsi, \bpsi'\in \Theta\times\Phi$, $t = 1,\dots,T$,
\begin{equation}
\setlength\abovedisplayskip{4pt}
\setlength\belowdisplayskip{4pt}
            \|\bOmega_t(\bpsi) - \bOmega_t(\bpsi')\| \le C_\bOmega \|\bpsi - \bpsi'\|, \ \|\nabla\bOmega_t(\bpsi) - \nabla\bOmega_t(\bpsi')\| \le L_\bOmega \|\bpsi - \bpsi'\|.
    \end{equation}
\end{assumption}

Assumption~\ref{assumption:omega_lipchitz} is made to ensure the generality of our analysis over different optimizers.
Note that $\bOmega$ is scheme-dependent w.r.t. the gradient-based optimizer we adopt for lower-level problems. In many cases, such as gradient descent where $\bOmega_t(\bpsi_{t-1}) = \btheta_{t-1}-\eta_t \nabla_{\btheta}g(\bpsi_{t-1})$, Assumption~\ref{assumption:omega_lipchitz} is a direct consequence of Assumption~\ref{assumption:fg_lipchitz_smooth}. 

We propose the following theorem and remark for convergence analysis of \fgsu on problem~\eqref{eq:T-step_objective}.
Notice the convergence result can be extended to the primal BO problem~\eqref{eq:bo} with some further assumptions.
We place a proof scratch and some discussions in~\cref{sec:extend_discussions_on_theory}.
\begin{theorem}[Convergence]\label{theorem:convergence_to_stationary_point}
Suppose that Asumption~\ref{assumption:fg_lipchitz_smooth} and Assumption~\ref{assumption:omega_lipchitz} hold. Setting the learning rate $\beta = \frac{\rho}{(\rho+1)L_h}$ for gradient descent over the hyperparameter $\bphi$, then there exists a constant $L_h$ (depending on $C$, $L$, $C_{\Omega}$, $L_{\Omega}$, and $T$, and defined formally in the proof) such that
\begin{equation}\label{eq:convergence_bound}
\setlength\abovedisplayskip{3pt}
\setlength\belowdisplayskip{3pt}
    \begin{aligned}
        \frac{1}{K} \sum_{k=0}^{K-1} \mathbb{E}\left[\|\nabla h(\bphi_k)\|^2\right]\leq \frac{4L_h\left(\mathbb{E}[h(\bphi_0)] - \mathop{\min}_\bphi h(\bphi)\right)}{\rho K}.
    \end{aligned}
\end{equation} 
\end{theorem}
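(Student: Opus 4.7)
The plan is to follow the standard template for nonconvex stochastic gradient descent on an $L_h$-smooth objective. The proof will decompose into three parts: (i) establishing a global smoothness constant $L_h$ for $h(\bphi) = f(\btheta_T(\bphi), \bphi)$; (ii) invoking the unbiasedness of the \fgsu estimator together with the second-moment bound from \cref{lemma:variance}; (iii) applying the descent lemma, telescoping across $K$ iterations, and exploiting the specific choice $\beta = \rho/((\rho+1)L_h)$ to obtain the final rate.

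The main obstacle is the first step: proving that $h$ is $L_h$-smooth with an explicit constant depending on $C, L, C_{\Omega}, L_{\Omega}, T$. I would induct on $t$ to show that each $\btheta_t(\bphi)$ is Lipschitz and that its Jacobian $\bZ_t = d\btheta_t/d\bphi$ has both bounded operator norm and Lipschitz-continuous dependence on $\bphi$. From \cref{assumption:omega_lipchitz}, the recursion $\bZ_t = \bA_t \bZ_{t-1} + \bB_t$ implies $\|\bZ_t\| \le C_{\Omega}\|\bZ_{t-1}\| + C_{\Omega}$, giving a geometric bound in $T$. A parallel inductive argument, using $L_\Omega$-smoothness of $\bOmega_t$, bounds the Lipschitz constant of $\bphi \mapsto \bZ_t(\bphi)$. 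Combining these bounds with the $C$-Lipschitz and $L$-smooth assumptions on $f$ via the chain rule
\begin{equation*}
\nabla h(\bphi) = \bZ_T(\bphi)^\top \partial_{\btheta} f(\btheta_T(\bphi),\bphi) + \partial_{\bphi} f(\btheta_T(\bphi),\bphi)
\end{equation*}
yields a Lipschitz bound on $\nabla h$, which is exactly the required $L_h$.

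Once $L_h$-smoothness is in hand, the rest is routine. Conditioning on $\bphi_k$ and using $\mathbb{E}[\hat{\nabla} h(\bphi_k)] = \nabla h(\bphi_k)$ together with \cref{lemma:variance}, I obtain $\mathbb{E}\|\hat{\nabla} h(\bphi_k)\|^2 = \|\nabla h(\bphi_k)\|^2 + \tfrac{1}{\rho}\|\nabla h(\bphi_k)\|^2 = \tfrac{\rho+1}{\rho}\|\nabla h(\bphi_k)\|^2$. Plugging the update $\bphi_{k+1} = \bphi_k - \beta \hat{\nabla} h(\bphi_k)$ into the descent inequality
\begin{equation*}
h(\bphi_{k+1}) \le h(\bphi_k) + \langle \nabla h(\bphi_k), \bphi_{k+1}-\bphi_k\rangle + \tfrac{L_h}{2}\|\bphi_{k+1}-\bphi_k\|^2
\end{equation*}
and taking conditional expectation gives
\begin{equation*}
\mathbb{E}[h(\bphi_{k+1})\mid\bphi_k] \le h(\bphi_k) - \bigl(\beta - \tfrac{L_h \beta^2 (\rho+1)}{2\rho}\bigr)\|\nabla h(\bphi_k)\|^2.
\end{equation*}

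With $\beta = \rho/((\rho+1)L_h)$, the coefficient simplifies to $\rho/(2(\rho+1)L_h)$. Rearranging, taking total expectations, and telescoping from $k=0$ to $K-1$ yields
\begin{equation*}
\frac{1}{K}\sum_{k=0}^{K-1} \mathbb{E}\|\nabla h(\bphi_k)\|^2 \le \frac{2(\rho+1)L_h\bigl(\mathbb{E}[h(\bphi_0)] - \min_\bphi h(\bphi)\bigr)}{\rho K}.
\end{equation*}
Since $\rho \in (0,1]$ gives $\rho+1 \le 2$, the bound is loosened to the stated $4L_h/(\rho K)$ factor, completing the proof.
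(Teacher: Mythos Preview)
Your proposal is correct and follows essentially the same approach as the paper: the paper proves $L_h$-smoothness of $h$ via an inductive argument on the Lipschitz and smoothness constants of $\btheta_t(\bphi)$ and $\bZ_t(\bphi)$ (their Lemmas~\ref{lemma:theta_lipchitz_smooth} and~\ref{lemma:h_smooth}), then applies the descent lemma, uses unbiasedness plus \cref{lemma:variance} to control the second moment, telescopes, substitutes $\beta = \rho/((\rho+1)L_h)$, and finally uses $\rho+1\le 2$ to obtain the factor of $4$---exactly as you outline.
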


\begin{remark}
Theorem~\ref{theorem:convergence_to_stationary_point} shows that Algorithm~\ref{alg:(FG)^U} converges to an $\epsilon$-accurate stationary point with a convergence rate of $\mathcal{O}(\epsilon^{-1}\rho^{-1})$.
\end{remark}

Recall that \(\rho = \frac{b}{N-1}\), which indicates that the convergence rate is linearly dependent on \(N\), which poses a significant challenge when managing high-dimensional meta-parameters \(\bphi\). 
However, it is important to note that the dimension-dependent convergence rate represents an upper bound, and scalability has been found to be feasible with several practical considerations, as discussed in the following subsection.

\subsection{Practical Considerations}\label{sec:practical}

\textbf{Choice of \(b\)}.\ 
According to the convergence analysis in~\cref{sec:convergence}, a sample size of \(b = \mathcal{O}(N)\) is required to achieve a convergence rate of \(\mathcal{O}(\epsilon^{-1})\). 
However, it has been widely observed that forward gradient and zeroth-order optimization, despite having dimension-dependent convergence rates, work well empirically with \(b = \mathcal{O}(1)\) in large-scale scenarios, such as in LLM fine-tuning~\cite{malladi2023fine, zhang2024revisiting}. 
In this paper, we select the largest possible \(b\) that does not exceed the GPU memory limit for our empirical study.
Additionally, gradient accumulation is utilized to further control variance and stabilize the training process.

\textbf{Cost-Effective Two-Phase Paradigm}.\
It is important to note that the upper bound delineated in~\eqref{eq:convergence_bound} linearly depends on the performance discrepancy between the initialized meta parameter $\bphi_0$ and the optimal. 
This dependence motivates the adoption of a more cost-effective two-phase paradigm for large-scale \bo.
In the initial phase, we utilize efficient yet less accurate gradient approximation methods, such as TRGU~\cite{tgu} and Hessian-Free~\cite{choe2024sama}, to efficiently establish an initial $\bphi_0$ that surpasses random initialization, while keeping computational overhead manageable.
Subsequently, in the second phase, \fgsu is utilized for a more accurate, albeit less efficient, gradient approximation to further elevate the performance, leveraging extensive computational resources.

\textbf{Implementation}.\
The technique employed in computing $\nabla h(\bphi) \bv$ is identified as forward-mode automatic differentiation (forward-mode AD).
In advanced automatic differentiation libraries, such as JAX~\cite{jax} and PyTorch~\cite{Ansel_PyTorch_2_Faster_2024}, forward-mode AD is efficiently implemented as Jacobian-vector product (\texttt{jvp}), without the necessity of explicitly computing the Jacobian matrix.
The FLOP cost of \texttt{jvp} is approximately three times that of a standard forward pass, while the memory overhead is doubled.
In practice, it is only necessary to define the forward computational graph of inner optimization and invoke forward-mode AD, which simplifies the implementation process significantly.
Regarding distributed training, JAX offers the \texttt{vmap} interface for efficient intra-GPU parallelism and the \texttt{pmap} interface for effective inter-GPU parallelism.

\textbf{Zeroth-order \Bo}.\
In certain applications of bi-level optimization, the inner problem is approached as a black box, where the gradient of $\bOmega$ is inaccessible, rendering the analytical gradient unrolling unfeasible. 
For example, in PDE-constrained optimization~\cite{bilevel_pinn, picprop}, in which the inner problem entails solving a Partial Differential Equation (PDE) using a non-differentiable solver. 
In such scenarios, rather than employing forward-mode Automatic Differentiation (AD), one can resort to Finite Difference methods to approximate the directional gradient $\nabla h(\bphi) \bv$ by
\begin{equation}
    \nabla h(\bphi) \bv = \lim_{\mu \rightarrow 0} \frac{h(\mathbf{\bphi} + \mu \bv) - h(\bphi)}{\mu} \approx \frac{h(\mathbf{\bphi} + \bar\mu \bv) - h(\bphi)}{\bar\mu}
\end{equation}
with sufficiently small positive $\bar\mu > 0$.
We refer to this zeroth-order variant of \fgsu as \fgsuzo, noting that the computation solely encompasses two forward passes and does not involve the utilization of any first-order information. 
The memory complexity is the same as forward-mode AD and the actual computation time will be slightly less than forward-mode AD, at the cost of introducing an approximation bias.
We give a more detailed discussion within the context of \textit{zeroth-order optimization}~\cite{zo_primer} in~\cref{appendix:zo}, and empirically study a corresponding case in~\cref{sec:experiment}.

\section{Experiments}\label{sec:experiment}

We conduct experiments across various contexts, as detailed in the respective subsections. 
Initially, we engage in an image data condensation task, where we focus on a comprehensive performance comparison between \fgsu and both classical and large-scale bi-level optimization algorithms. 
Subsequently, we investigate meta-learning for the online adaptation of language models, employing a GPT model as the inner model, to illustrate how \fgsu effectively circumvents the non-constant memory issue associated with RGU. 
Finally, we address a physics-informed bi-level optimization problem, where gradient-based inner solvers are ineffective, to demonstrate the efficacy of combining \fgsuzo, the zeroth-order variant of \fgsu discussed in \cref{sec:practical}, with non-differentiable numerical solvers.



\textbf{Data Condensation}. \ 
To overcome the challenges posed by large-scale datasets, a line of works known as data condensation~\cite{distillation, distillation_review} has been proposed. The main idea is to generate a compact, synthesized dataset, designed to elicit similar behaviors in machine learning models as those trained with the original, massive dataset.
The objective of the mainstream principles~\cite{distillation_review} designed for data condensation can be naturally formulated as a bi-level optimization problem.
We focus on the best-known principle \textit{performance matching}~\cite{distillation_review} on classification tasks, which can be formulated as
\begin{equation}\label{eq:data_condensation}
\setlength\abovedisplayskip{6pt}
\setlength\belowdisplayskip{6pt}
\begin{aligned}
    \mathop{\min}_{\mathcal{D}_c}\  \mathcal{L}(\theta_T; \mathcal{D}_{o}), \quad \mathrm{where}\ \ \theta_t = \theta_{t-1} - \eta\nabla\mathcal{L}(\theta_{t-1}; \mathcal{D}_{c}),\ \  t = 1,\dots,T,
\end{aligned}
\end{equation}
where $\mathcal{D}_o$, $\mathcal{D}_c$ respectively denote the original and condensed dataset, $\theta$ denotes the model parameter, $\mathcal{L}$ denotes the cross-entropy loss function, and $\eta$ represents the step-size for inner optimization.

\begin{table}[h]
\centering
\setlength\tabcolsep{3.5pt}
\renewcommand\arraystretch{1.1}
\setlength{\abovecaptionskip}{0.1cm}
\setlength{\belowcaptionskip}{-0.2cm}
\footnotesize
\resizebox{\linewidth}{!}{
\begin{tabular}{c|cc|cccc|cc}
    \toprule
    \multirow{2}{*}{\textbf{Dataset}} & \multirow{2}{*}{\textbf{IPC}} & \multirow{2}{*}{\textbf{Ratio} ($\%$)} & \multicolumn{4}{c}{\textbf{Approaches}} & \multicolumn{2}{c}{\textbf{For Reference}}\\
    & &  & TRGU & Hessian-Free & Neumann & \boldfgsu & RGU & WHOLE \\
    \midrule
    \multirow{3}{*}{MNIST} & 1 &  0.017 & 73.76\tiny{$\pm$1.68} & 65.98\tiny{$\pm$1.38} & 68.37\tiny{$\pm$1.44} & \textbf{82.44}\tiny{$\pm$0.68} & 92.32\tiny{$\pm$0.33} &  \multirow{3}{*}{99.6\tiny{$\pm$0.00}}\\
    & 10 &  0.17 & 94.05\tiny{$\pm$0.33} & 94.97\tiny{$\pm$0.34} & 95.75\tiny{$\pm$0.24} & \textbf{96.12}\tiny{$\pm$0.28} & 96.79\tiny{$\pm$0.29} & \\
    & 50 &  0.83 & 96.63\tiny{$\pm$0.41} & 96.34\tiny{$\pm$0.31} & 96.78\tiny{$\pm$0.22} & \textbf{97.01}\tiny{$\pm$0.19} & 97.72\tiny{$\pm$0.23} & \\
    \midrule
    \multirow{3}{*}{CIFAR-10} & 1 &  0.02 &  20.78\tiny{$\pm$1.07} & 19.72\tiny{$\pm$1.28} & 21.33\tiny{$\pm$0.90} & \textbf{29.37}\tiny{$\pm$0.75} & 34.08\tiny{$\pm$0.55} &  \multirow{3}{*}{84.8\tiny{$\pm$0.10}}\\
    & 10 &  0.2 & 44.01\tiny{$\pm$0.57} & 45.32\tiny{$\pm$1.02} & 47.67\tiny{$\pm$0.87} & \textbf{50.10}\tiny{$\pm$0.56} & 53.15\tiny{$\pm$0.53} &\\
    & 50 &  1 & 49.22\tiny{$\pm$0.45} & 48.73\tiny{$\pm$0.78} & 50.02\tiny{$\pm$0.69} & \textbf{51.98}\tiny{$\pm$0.44} & 56.37\tiny{$\pm$0.37} &\\
    \midrule
    \multirow{3}{*}{CIFAR-100} & 1 &  0.2 &  3.96\tiny{$\pm$0.68} & 3.14\tiny{$\pm$0.41} & 4.52\tiny{$\pm$0.56} & \textbf{8.22}\tiny{$\pm$0.45} & 15.61\tiny{$\pm$0.32} &  \multirow{3}{*}{56.2\tiny{$\pm$0.30}}\\
    & 10 &  2 & 20.20\tiny{$\pm$0.66} & 19.01\tiny{$\pm$0.84} & 20.87\tiny{$\pm$0.82} & \textbf{23.38}\tiny{$\pm$0.33} & 25.42\tiny{$\pm$0.45} &\\
    & 50 &  10 & 22.33\tiny{$\pm$0.93}  & 23.59\tiny{$\pm$0.71} & 24.52\tiny{$\pm$0.77} & \textbf{25.84}\tiny{$\pm$0.31} & 28.52\tiny{$\pm$0.53} &\\
    \bottomrule
\end{tabular}
}
\caption{The performance (testing accuracy \%) comparison among various bilevel optimization methods on the data condensation task over three datasets. All the datasets are condensed using a 3-layer ConvNet. IPC: image(s) per class. Ratio (\%): the ratio of condensed examples to the whole training set.}
\label{tab:data_condenstaion_cnn}
\end{table}

We conducted our experiments following the standard data condensation setting established by~\cite{distillation, dc_gradient_matching, dc_cafe}. 
A more detailed task description is given in~\cref{appendix:task_data_condensation} and implementation details are given in~\cref{appendix:implementation_data_condensation}.

The condensed datasets are evaluated using 3-layer convolutional networks with randomly initialized parameters, and the average accuracies on test datasets are summarized in \cref{tab:data_condenstaion_cnn}. 
Compared to large-scale bi-level optimization methods like TRGU and Hessian-Free, which prioritize efficiency at the expense of approximation accuracy, \fgsu exhibits significantly better performance, due to more accurate gradient approximation as explained in~\cref{appendix:extended_discussion}.
Additionally, we assessed Neumann Series (denoted as Neumann in \cref{tab:data_condenstaion_cnn}), an IF-based method that mitigates gradient approximation errors through extended computations, as introduced in~\cref{appendix:if}.
While it demonstrates performance enhancements over the Hessian-Free method, Neumann still yields suboptimal performance compared to \fgsu, owing to the inherent bias of the IF-based method.
Further discussions and supporting evidence are available in~\cref{appendix:if}.

The results of RGU, which represent the upper performance bound for both TRGU and \fgsu, are provided for reference, along with the results from training on the entire dataset (denoted as WHOLE in \cref{tab:data_condenstaion_cnn}), representing the upper performance bound for all approaches. 
However, it is crucial to acknowledge that RGU is not practical in large-scale bi-level optimization scenarios due to its non-constant memory requirements, as discussed in \cref{sec:background}. 
This limitation will be further exemplified in the subsequent, where the inner model is significantly larger.
In principle, the performance of \fgsu can be further improved to approach that of RGU by increasing the number of random directions for gradient approximation.

The memory and computational efficiencies of TRGU, Hessian-Free, and \fgsu in the most challenging case (CIFAR-100, IPC=50) are reported in \cref{fig:overview} (Bottom Right), demonstrating that the efficiency of \fgsu can be significantly enhanced through intra/inter-GPU parallelism.

\textbf{Meta Learning Online Adaptation of Language Models}. \ 
The online adaptation of language models (LM) has been studied recently to keep the knowledge of LM current~\cite{online1, online2}.
However, trivial auto-regressive fine-tuning the LM, which applies uniform weights to all tokens, often results in suboptimal performance in downstream tasks.
This issue stems from the default average negative log-likelihood (NLL) loss, which fails to capture the significance of tokens~\cite{hu2023camels}.
To overcome this limitation,~\cite{hu2023camels} proposed Context-aware Meta-learned Loss Scaling (CaMeLS), a strategy that employs meta-learning to adjust token weights for more effective online adaptation.
Specifically, they meta train a weight model to reweight the auto-regressive loss during online fine-tuning, aiming to enhance LM performance on downstream question-answering tasks.
A comprehensive task description and the mathematical formulation of the objectives are detailed in~\cref{appendix:task_camels}.

The trained weight model is subsequently fine-tuned on unseen online documents and evaluated on corresponding question-answering tasks.
In \cite{hu2023camels}, RGU is utilized for meta gradient approximation. 
To mitigate the non-constant memory issue associated with RGU, a DistilGPT2 model~\cite{distil} is chosen as the surrogate base model for training the weight model, instead of larger models typically employed for online adaptation.
Additionally, a very limited unrolled depth of 6 is utilized within a 40 GiB GPU memory budget.
In our experiments, since \fgsu has circumvented the non-constant memory issue associated with RGU, we are able to increase the unrolled depth and upscale the base model for training the weight model.
Empirical evaluations are conducted on two datasets, StreamingQA~\cite{liska2022streamingqa} and SQuAD-Seq~\cite{rajpurkar2016squad}.

\begin{table}[htbp]
\centering
\footnotesize
\renewcommand\arraystretch{1.1}
\setlength{\abovecaptionskip}{0.1cm}
\setlength{\belowcaptionskip}{-0.2cm}
\begin{tabular}{cccccc}
    \toprule
    \multirow{2}{*}{\textbf{Model (\# params)}} & \multirow{2}{*}{\textbf{Method}} & \multicolumn{2}{c}{\textbf{StreamingQA}} & \multicolumn{2}{c}{\textbf{SQuAD-Seq}} \\
    & & EM ($\uparrow$) & F1 ($\uparrow$) & EM ($\uparrow$) & F1 ($\uparrow$) \\
    \midrule
    \multirow{3}{*}{DistilGPT2 (82M)} & CaMeLS + RGU~\cite{hu2023camels, tack2024mac} & 1.62 & 5.79 & 1.45 & 3.08 \\
    & CaMeLS + RGU (impl.) & 2.04 & 5.53 & 1.52 & 3.16 \\
    & CaMeLS + \boldfgsu (ours) & \textbf{2.22} & \textbf{6.37} & \textbf{1.72} & \textbf{3.50} \\
    \midrule
    \multirow{3}{*}{GPT2-Large (774M)} & CaMeLS + RGU~\cite{hu2023camels, tack2024mac} & 5.35 & 10.60 & 4.97 & 8.63 \\
    & CaMeLS + RGU (impl.) & 7.02 & 12.19 & 4.86 & 8.57 \\
    & CaMeLS + \boldfgsu (ours) & \textbf{7.21} & \textbf{12.50} & \textbf{5.56} & \textbf{8.99} \\
    \midrule
    \multirow{3}{*}{GPT2-XL (1.5B)} & CaMeLS + RGU~\cite{hu2023camels, tack2024mac} & 6.55 & 11.67 & 6.70 & 10.15 \\
    & CaMeLS + RGU (impl.) & 7.93 & 12.94 & 6.71 & 9.65 \\
    & CaMeLS + \boldfgsu (ours) & \textbf{8.89} & \textbf{14.42} & \textbf{7.37} & \textbf{10.37} \\
    \bottomrule
\end{tabular}
\caption{Comparison of the online adaptation performance. 
The reported evaluation metrics include the exact match (EM) and F$1$ scores.
For vanilla CaMeLS~\cite{hu2023camels}, RGU is conducted with unrolled depth $6$, using DistilGPT2 as the base model.
We present both the results reported by \cite{tack2024mac} and those from our implementation (denoted as impl.).
For CaMeLS + \fgsu, we select unrolled depths from $\{24, 48\}$, and the base model from $\{$DistilGPT2, GPT2$\}$. 
We report the results for the combination that yields the best F1 score.
Additional details and ablation studies are documented in~\cref{appendix:result_camels}.
}
\label{tab:camels}
\end{table}
Firstly, we increased the unrolled depth while maintaining the base model as a DistilGPT2.
We plotted the F1 scores and GPU memory usages for RGU with unrolled depths of $\{1, 2, 4, 6\}$ and \fgsu with unrolled depths of $\{24, 48\}$ on StreamingQA in~\cref{fig:overview} (Bottom Left).
The performance of the weight model is positively correlated with the unrolled depth, substantiating the benefits of training with larger unrolled depths.
The non-constant memory issue associated with RGU can be observed when the unrolled depth increases, while \fgsu maintains constant memory even with large unrolled depth.
Subsequently, we endeavored to upscale the base model to GPT2 to reduce the disparity between training and evaluation.
The performances are summarized in~\cref{tab:camels}, with detailed ablation studies on unrolled depths and base model variants documented in~\cref{tab:camels2} and~\cref{tab:camels3}.

\textbf{Data-driven Discovery of Partial Differential Equations (PDEs)}. \ 
Let us consider the following general forms of parametrized and nonlinear PDEs:
\begin{equation}\label{eq:pde}
    \begin{aligned}
        u_t + \mathbf{\mathcal{N}}[u; \bphi] = 0, \ x\in\Psi, t\in[0, T],
    \end{aligned}
\end{equation}
where $x$ denotes the space-time coordinate, $\Psi$ denotes a bounded domain with boundary, $u: [0, T] \times \Psi \rightarrow \mathbb{R}$ denotes the latent solution, $u_t$ represents the first-order derivative of $u$ with respect to $t$, 
and $\mathbf{\mathcal{N}}$ is a general differential operator parameterized by $\bphi$, acting on $\Psi$.
This setup encompasses a broad spectrum of problems in physics.
For example, the one-dimensional Burgers' equation is defined by $\mathbf{\mathcal{N}}[u; \bphi] = \mu uu_x - \nu u_{xx}$, where $\bphi = (\mu, \nu) \in \mathbb{R}^2$, and $u_x$, $u_{xx}$ represent the first and second-order derivatives of $u$ with respect to $x$, respectively.

\begin{figure}[ht]
\begin{minipage}[t]{0.4\textwidth}
\includegraphics[width=\linewidth]{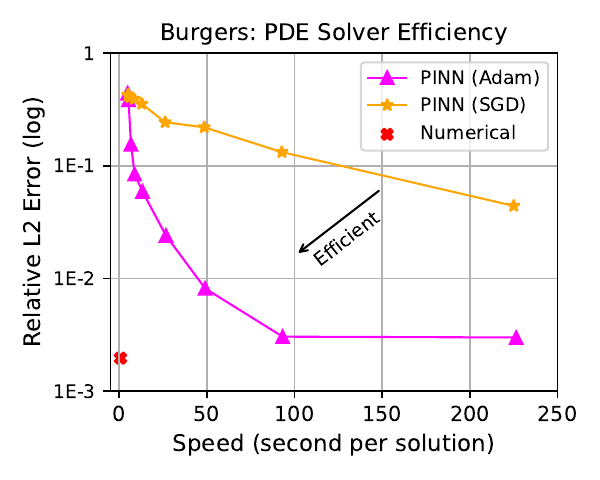}
\end{minipage}
\begin{minipage}[t]{0.59\textwidth}
\small
\centering
\footnotesize
\setlength\tabcolsep{3.5pt}
\renewcommand\arraystretch{1.2}
\setlength{\abovecaptionskip}{0.1cm}
\vspace{-4.5cm}
\begin{tabular}{cccc}
    \toprule
    \multicolumn{2}{c}{Method} & \fgsu & \fgsuzo\\
    \midrule
    \multicolumn{2}{c}{Inner solver} & PINN~\cite{pinns} & Numerical\\
    \midrule
    \multirow{2}{*}{\textbf{Burgers}} & $\boldsymbol{\epsilon}_{\bphi}(\text{E-2}, \downarrow)$ & 2143.58\tiny{$\pm855.26$} & \textbf{0.97}\tiny{$\pm0.45$}\\
    & $\boldsymbol{\epsilon}_{u}(\text{E-3}, \downarrow)$ & 336.06\tiny{$\pm46.91$} & \textbf{0.63}\tiny{$\pm0.33$}\\
    \midrule
    \multirow{2}{*}{\textbf{Allen-Cahn}} & $\boldsymbol{\epsilon}_{\bphi}(\text{E-2}, \downarrow)$ & 438.13\tiny{$\pm101.77$} & \textbf{2.34}\tiny{$\pm0.64$}\\
    & $\boldsymbol{\epsilon}_{u}(\text{E-3}, \downarrow)$ & 133.61\tiny{$\pm35.93$} & \textbf{0.97}\tiny{$\pm$0.54}\\
    \midrule
    \multirow{2}{*}{\textbf{KdV}} & $\boldsymbol{\epsilon}_{\bphi}(\text{E-2}, \downarrow)$ & 94.40\tiny{$\pm4.31$} & \textbf{0.72}\tiny{$\pm0.57$}\\
    & $\boldsymbol{\epsilon}_{u}(\text{E-3}, \downarrow)$ & 832.81\tiny{$\pm67.01$} & \textbf{2.72}\tiny{$\pm1.55$}\\
    \bottomrule
\end{tabular}
\label{tab:pde}
\end{minipage}
\caption{\textbf{Left}: Comparison of efficiency between the PINN solver and the numerical solver. We evaluated Adam~\cite{kingma2014adam} and SGD as the inner optimizers for the PINN solver, with steps ranging from 100 to 50,000. The results demonstrate that the numerical solver is significantly more efficient. \textbf{Right}: Comparison of relative L2 errors in the prediction of $\bphi$ and $u$. $\boldsymbol{\epsilon}_\bphi = \|\bphi_{pred} - \bphi\|_2 / \|\bphi\|_2$, $\boldsymbol{\epsilon}_u = \|u_{pred} - u\|_2 / \|u\|_2$.}
\label{fig:pinn_efficiency}
\end{figure}
The problem of data-driven discovery of PDEs~\cite{pinns} can be framed as follows: given a set of scattered observations of the latent solution $u(x)$, what are the parameters most accurately describing the observed data?
The problem can be formulated as a PDE-constrained optimization problem (PDECO):
\begin{equation}\label{eq:pdeco}
    \min_\bphi\  \mathbb{E}_{x, u \sim \mathcal{D}} \ |u(x; \bphi) - u|^2 \quad s.t. \quad u_t + \mathbf{\mathcal{N}}[u(\cdot;\bphi); \bphi] = 0, x\in\Psi,
\end{equation}
where $\mathcal{D} = \{(x_i, u_i)\}_{1:k}$ denotes the observed data.
In cases where the closed-form solutions of the nonlinear PDEs are intractable, parametric solutions $u_\btheta$ are used to approximate the latent solution $u$ for given $\bphi$.
The PDECO in~\eqref{eq:pdeco} is then reformulated into a \bo problem:
\begin{equation}\label{eq:pdeco_bilevel}
    \begin{aligned}
        \min_\bphi\  \mathbb{E}_{x, u \sim \mathcal{D}} \ |u_{\btheta_S(\bphi)}(x; \bphi) - u|^2 \quad s.t.\quad \btheta_s(\bphi) = \boldsymbol{\Omega}_s(\btheta_{s-1}, \bphi), s=1,\dots,S.
    \end{aligned}
\end{equation}
Employing gradient-based PDE solvers, such as physics-informed neural networks (PINN)~\cite{pinns}, facilitates the direct application of \fgsu.
However, as demonstrated in~\cref{fig:pinn_efficiency} (Left), the accuracy and efficiency of PINNs fall short of the rigorous demands of scientific computing.
This limitation has prompted us to integrate faster and more accurate traditional solvers like the spectral method~\cite{ames2014numerical} (see also~\cref{appendix:numerical}) to tackle the inner problem.
Given these solvers are non-differentiable, we employ \fgsuzo, the zeroth-order variant of \fgsu introduced in~\cref{sec:practical}, to solve the problem.

We conduct experiments on three non-linear PDEs: Burgers, Allen-Cahn, and KdV, with a more detailed task description available in~\cref{appendix:task_pde}.
The results are summarized in~\cref{fig:pinn_efficiency} (Right).
We can observe that the combination of \fgsuzo and the numerical solver significantly outperforms \fgsu and the PINN solver, in terms of both the prediction on $\bphi$ and $u$.
The implementation details are documented in~\cref{appendix:implementation_pde}.

\section{Conclusion}\label{sec:conclusion}
In this work, we propose a novel algorithm \textbf{F}orward \textbf{G}radient \textbf{U}nrolling with \textbf{F}orward \textbf{G}radient, abbreviated as \textbf{\fgsu}, designed to tackle the challenges associated with large-scale \bo. 
We conduct a convergence analysis of \fgsu, perform extensive comparisons with existing methods, and provide detailed discussions on its practical applications. 
Additionally, we undertake an empirical evaluation across a series of large-scale \bo tasks. 
Our findings indicate that \fgsu effectively complements existing \bo algorithms, addressing gaps in large-scale \bo scenarios. 

\textbf{Limitations and future works}. \ 
The experiments conducted in this paper are of relatively small scale, with the largest inner model being a GPT-2 model. 
We look forward to validating its effectiveness on larger-scale bi-level optimization tasks.
Additionally, the application of black-box bi-level optimization and the potential of \fgsuzo remain underexplored, considering the prevalent black-box interaction between users and models today.
We hope our work will inspire further development of large-scale bi-level optimization algorithms and their application in corresponding scenarios. 
Furthermore, we have not specifically addressed the efficiency issues inherited by \fgsu from the forward gradient method. 
Enhancing the efficiency of \fgsu while maintaining its gradient estimation accuracy will be an important direction for future research.

\section{Acknowledgements}

This research is supported by the National Research Foundation Singapore under the AI Singapore Programme (AISG Award No: AISG2-TC-2023-010-SGIL) and the Singapore Ministry of Education Academic Research Fund Tier 1 (Award No: T1 251RES2207, T1 251RES2218). 
The computational work for this article was partially performed on resources of the National Supercomputing Centre, Singapore (\href{https://www.nscc.sg}{https://www.nscc.sg}).

\medskip
\small
\newpage

\appendix
\counterwithin{figure}{section}
\counterwithin{table}{section}
\newpage
\section{Algorithm}\label{appendix:alg}
\begin{algorithm}[h]
\caption{$\mathbf{\textbf{(FG)}^2}$\textbf{U}: Forward Gradient Unrolling with Forward Gradient}\label{alg:(FG)^U}
\begin{algorithmic}[1]
\REQUIRE Initial inner parameters $\btheta_0$, initial meta parameter $\bphi_0$, random direction distribution $\boldsymbol{p}$, number of random directions $b$, total meta steps $K$, meta update mappings $\boldsymbol{\Psi}_{1:K}$.
\STATE $\btheta \leftarrow \btheta_0$, $\bphi \leftarrow \bphi_0$
\FOR{$k = 1,\dots,K$}
    \FOR{$i=1,\dots,b$}
        \STATE Sample $\bv_i \sim \boldsymbol{p}(\cdot)$ and initialize $\boldsymbol{y}_i \leftarrow \frac{\partial \boldsymbol{\Omega}_0(\btheta, \bphi)}{\partial \bphi}\bv_i$
    \ENDFOR
    \FOR{$t=1,\dots,T$}
        \STATE $\btheta \leftarrow \boldsymbol{\Omega}_t(\btheta, \bphi)$, $\bA \leftarrow \frac{\partial \boldsymbol{\Omega}_t(\btheta, \bphi)}{\partial \btheta}$, $\bB \leftarrow \frac{\partial \boldsymbol{\Omega}_t(\btheta, \bphi)}{\partial \bphi}$
        \FOR{$i=1,\dots,b$}
            \STATE $\boldsymbol{y}_i \leftarrow \bA \boldsymbol{y}_i + \bB \bv_i$
        \ENDFOR
    \ENDFOR
    \FOR{$i=1,\dots,b$}
        \STATE $w_i \leftarrow \frac{\partial f(\btheta, \bphi)}{\partial \btheta} \boldsymbol{y}_i + \frac{\partial f(\btheta, \bphi)}{\partial \bphi}\bv_i$
    \ENDFOR
    \STATE $\bphi \leftarrow \boldsymbol{\Psi}_k(\bphi, \frac{1}{b} \sum_{i=1}^b w_i \bv_i^T)$
\ENDFOR
\RETURN $\bphi$
\end{algorithmic}
\end{algorithm}

\section{Extended Discussion on Bi-level Optimization}\label{appendix:extended_discussion}

\subsection{Truncated Reverse Gradient Unrolling (TRGU)}\label{appendix:trgu_bias}
To address the memory issue of GU methods, truncated Reverse Gradient Unrolling (TRGU)~\cite{tgu} is proposed to reduce the memory usage by preserving only the last $K$ steps of the inner optimization trajectory.
However, this introduces a significant bias in large-scale scenarios, particularly when the permissible $K$ is small.

Recall \eqref{eq:bgu} and \eqref{eq:bgu_2}, where the conventional RGU method computes the hypergradient by fully unrolling the $T$-step inner optimization into a computational graph. Instead, TRGU performs $s$-step truncated back-propagation and approximates the gradient with the intermediate term $\bc_{T-s}$:
\begin{equation}\label{eq:trgu}
    \bc_{T-s} = \bc_T + \sum_{t = T-s+1}^T \bB_t\bA_{t+1}\cdots\bA_{T}d_T.
\end{equation}
According to Proposition 3.1 in \cite{tgu}, if the inner-level objective function $g$ is $L$-smooth, twice-differentiable and globally $\alpha$-strongly convex, and the gradient update rule writes $\btheta_t = \btheta_{t-1} - \eta\nabla_{\btheta}g(\btheta_{t-1},\bphi)$, then the bias of $s$-step TRGU would be bounded by
\begin{equation}\label{eq:trgu_bias}
    \|\nabla_{\bphi}h - \bc_{T-s}\| \leq \frac{(1-\eta \alpha)^{s}}{\eta \alpha} \|\bd_T\|\max_{t\in 0,\dots, T-s}\|\bB_t\|.
\end{equation}
The bound \eqref{eq:trgu_bias} demonstrates an exponentially decaying rate in $s$ over the bias of $s$-step TRGU. However, when $s$ gets smaller, which means that we truncate the computational graph heavier in pursuit of lower memory cost, the bias would grow exponentially. This would result in an inaccurate calculation of the hypergradient.
Contrastively, our $\mathbf{\textbf{(FG)}^2}$\textbf{U} is an unbiased estimator of the hypergradient, while still keeping high memory efficiency with a small sample size of forward gradient as in \eqref{eq:forward_gradient_batch}.

\subsection{Implicit Function (IF)}\label{appendix:if}

Another idea for computing the implicit gradient is to utilize the implicit function theorem (IFT)~\cite{ift}.
Suppose that the inner optimality $\nabla_\btheta g(\btheta_T(\bphi), \bphi) \approx 0$ is approximately achieved by sufficient inner optimization steps.
If $g$ is second-order differentiable, by applying the implicit function theorem and taking the first-order derivative of $\bphi$, 

\begin{equation}\label{eq:ift}
\begin{aligned}
    \frac{\partial^2 g(\btheta_T(\bphi), \bphi)}{\partial \btheta_T^2} \frac{d\btheta_T(\bphi)}{d\bphi}+ \frac{\partial^2 g(\btheta_T(\bphi), \bphi)}{\partial \btheta_T \partial \bphi} \approx 0.
\end{aligned}
\end{equation}

Then, if the Hessian is further assumed to be invertible, the meta gradient can be approximated as

\begin{equation}\label{eq:if}
    \nabla h(\bphi) \approx -\underbrace{\frac{\partial f(\btheta_T(\bphi), \bphi)}{\partial \btheta_T}}_{\bd} \underbrace{\left(\frac{\partial^2 g(\btheta_T(\bphi), \bphi)}{\partial \btheta_T^2}\right)^{-1}}_{\boldsymbol{H}^{-1}} \underbrace{\frac{\partial^2 g(\btheta_T(\bphi), \bphi)}{\partial \btheta_T \partial \bphi}}_{\boldsymbol{Y}} + \underbrace{\frac{\partial f(\btheta_T(\bphi), \bphi)}{\partial \bphi}}_{\bc}.
\end{equation}

The main challenge lies in the computation of the inverse Hessian matrix $\boldsymbol{H}^{-1}$, which is intractable when $\btheta$ is of high dimensionality.
Fortunately, several iterative inverse Hessian vector product (\texttt{ihvp}) approximators requiring only Hessian vector product (\texttt{hvp}) and $\mathcal{O}(M)$ space can be employed to produce $\widehat{\bd\boldsymbol{H}^{-1}}$ for approximating $\bd\boldsymbol{H}^{-1}$, based on Conjugate Gradient~\cite{cg, imaml_cg}, Neumann Series~\cite{neumann_1, stocbio} and low-rank approximation~\cite{singh2020woodfisher, hataya2023nystrom}.

\textbf{Neumann Series}. \
The inverse Hessian vector product can be approximated with a truncated sum of Neumann series~\cite{neumann_1, stocbio},
\begin{equation}\label{eq:neumann}
    \widehat{\bd\boldsymbol{H}^{-1}} = \alpha \sum_{k=0}^K \bd(\boldsymbol{I} - \alpha \boldsymbol{H})^k = \bd\boldsymbol{H}^{-1} - \alpha \sum_{k=K+1}^\infty \bd(\boldsymbol{I} - \alpha \boldsymbol{H})^k,
\end{equation}
where $\alpha$ is a hyperparameter to ensure the convergence, and $K$ is the number of truncated steps.
Compared to other IF-based methods, the Neumann Series has demonstrated good empirical performance and stability~\cite{neumann_1}, and its stochastic variant has been well studied~\cite{stocbio}.

\textbf{Weakness (IF): Approximation Errors}. \
The errors of IF emanate from two distinct sources
Firstly, IFT presupposes that the Karush-Kuhn-Tucker (KKT) conditions of the inner problem are satisfied, leading to an approximation error in~\eqref{eq:ift} when iterative approximations of the inner solutions are used.
Secondly, the singular nature of the Hessian within neural network training~\cite{singular} leads to costly and unstable inverse Hessian approximation in practical applications, with a heavy reliance on engineering efforts~\cite{choe2024sama}.
More formally, recall
\begin{equation}\label{eq:true_grad}
    \nabla h(\bphi) = \underbrace{\frac{\partial f(\btheta_T(\bphi), \bphi)}{\partial \btheta_T}}_{\bd} \underbrace{\frac{d\btheta_T(\bphi)}{d\bphi}}_{\bZ} + \underbrace{\frac{\partial f(\btheta_T(\bphi), \bphi)}{\partial \bphi}}_{\bc}.
\end{equation}
The approximation error can be decomposed into
\begin{equation}\label{eq:if_error}
    \underbrace{\nabla h(\bphi) - \hat{\nabla} h(\bphi)}_{\boldsymbol{\epsilon}}= \underbrace{\bd (\bZ + \boldsymbol{H}^{-1} \boldsymbol{Y})}_{\boldsymbol{\epsilon}_{\text{if}}} + \underbrace{(\widehat{\bd\boldsymbol{H}^{-1}} - \bd \boldsymbol{H}^{-1})\boldsymbol{Y}}_{\boldsymbol{\epsilon}_{\text{inv}}}.
\end{equation}
To reduce the computational cost,
Hessian-free approaches~\cite{hessian_free_1, hessian_free_2, choe2024sama} propose approximating the Hessian as an identity matrix, incorporating additional assumptions about the inner model and objective. 
\begin{equation}\label{eq:hessian_free}
    \widehat{\bd\boldsymbol{H}^{-1}} = \alpha \bd \boldsymbol{I},
\end{equation}
where $\alpha > 0$ is a hyperparameter to control the magnitude. 
However, these numerous assumptions often diverge from practical scenarios, resulting in significant approximation errors and consequently inducing suboptimal outcomes.

\begin{figure}[htbp] 
\centering 
\includegraphics[width=1.0\textwidth]{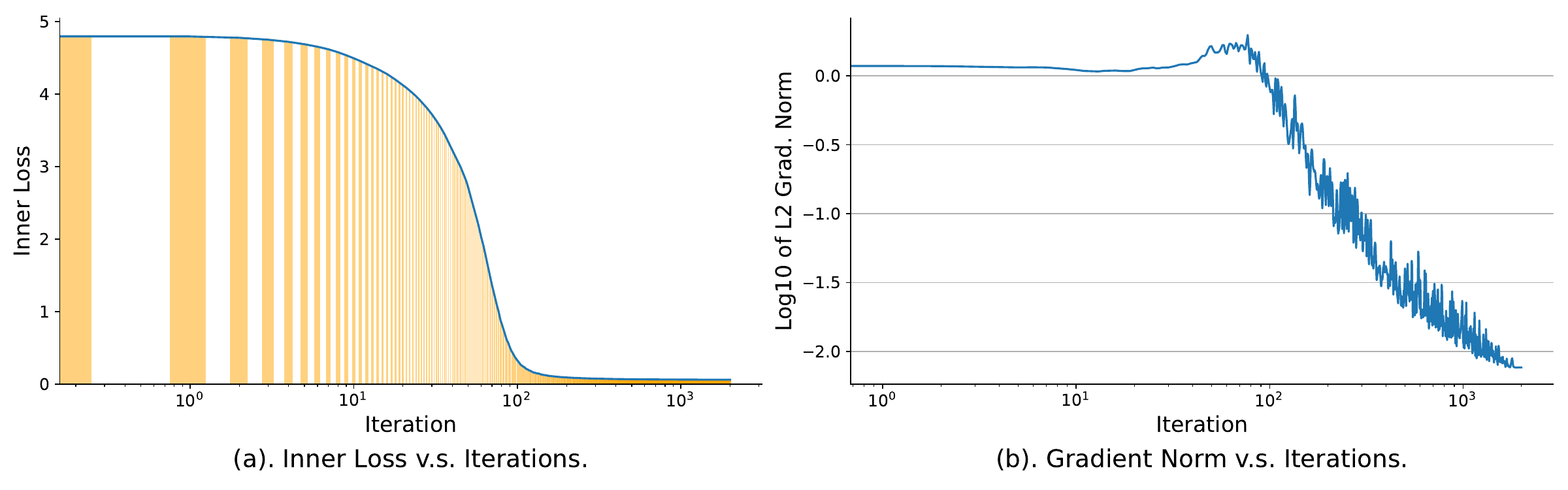}
\caption{\textbf{CIFAR100, IPC=50}: Inner Loss and gradient norm for Neumann
}
\label{fig:IFN_cifar100_ipc50_innerlossandgradnorm}
\end{figure}

In ~\cref{fig:IFN_cifar100_ipc50_innerlossandgradnorm}, it is evident that the inner optimization has not converged by the unrolled step 100, as indicated by both inner loss and gradient norm. This observation implies that the Karush-Kuhn-Tucker (KKT) conditions are not satisfied, leading to the conclusion that the approximation used in~\eqref{eq:ift} introduces a bias.

\subsection{Value Function (VF)}\label{appendix:vf}
The VF-based methodology~\cite{vf1, vf2_bome, vf3, vf4_f2sa} considers an equivalent reformulation of the original optimization problem as outlined in~\eqref{eq:T-step_objective}:

\begin{equation}\label{eq:vf}
\begin{aligned}
\mathop{\min}_{\btheta, \bphi}\ f(\btheta, \bphi) \quad s.t.\ g(\btheta, \bphi) \le g(\btheta_T(\bphi), \bphi).
\end{aligned}
\end{equation}

This reformulation casts the standard \bo challenge into a constrained single-level optimization framework. 
VF-based methods circumvent the need for second-order computations and have demonstrated near-optimal complexity, comparable to second-order methodologies in deterministic settings, as reported in~\cite{f2sa_theory}.

\textbf{Weakness (VF): stochastic optimization}. \ 
VF-based strategies have yet to gain widespread acceptance in practical ML applications. 
This limited adoption is primarily attributed to the challenges these methods face in addressing large-scale stochastic problems, where the complexity significantly impedes their performance~\cite{zhang2023introduction}.

\section{Proofs of Theoretical Results}\label{appendix:theory}

In this section, we detail the proofs of Lemma~\ref{lemma:variance} and Theorem~\ref{theorem:convergence_to_stationary_point}. Our approach to proving Theorem~\ref{theorem:convergence_to_stationary_point} follows a similar high-level approach as \cite{neumann_1, stocbio, tgu} with some important distinctions. Initially, in Lemma~\ref{lemma:theta_lipchitz_smooth}, we extend the smoothness properties of the objective functions \(f\) and \(g\), and the transition functions \(\Omega\), to the \(T\)-th iteration lower-level parameter \(\btheta_T\). Following this, Lemma~\ref{lemma:h_smooth} establishes the smoothness of the meta-learning objective \(f(\btheta_T, \bphi)\), incorporating results from the inner-loop computations. Building on the demonstrated smoothness of the meta objective function and the variance of the forward gradient method (as shown in Lemma~\ref{lemma:variance}), we then validate the convergence properties of Algorithm~\ref{alg:(FG)^U}.

The novelty in our proof of Theorem~\ref{theorem:convergence_to_stationary_point} lies in two primary aspects. Firstly, our analysis does not presume that the lower-level optimization yields an optimal solution \(\btheta^*\); instead, it more realistically assumes the use of \(\btheta_T\), which is derived from a finite number of iterations. This assumption aligns more closely with the computational constraints encountered in real-world scenarios. Secondly, our convergence analysis explicitly accounts for the variance of our unbiased gradient estimator, achieving a convergence rate of \(\mathcal{O}(\epsilon^{-1}\rho^{-1})\). This demonstrates that utilizing the forward gradient method, while significantly reducing memory requirements, does not adversely affect the algorithm's convergence rate, underscoring the practical viability and efficiency of our approach even with memory constraints.

In~\cref{sec:extend_discussions_on_theory}, we discuss how to extend the convergence of optimization problem~\eqref{eq:T-step_objective} into~\eqref{eq:bo}, with additional assumptions.

\renewcommand{\thetheorem}{3.\arabic{theorem}}
\subsection{Proof of Lemma~\ref{lemma:variance}}

For convenience, the lemma is restated as follows.

\begin{lemma}\label{appendix_lemma:variance}
For any $\bphi \in \Phi$, the gradient estimation with forward gradient method: \[\hat{\nabla}h(\bphi) = \frac{1}{b} \sum_{i=1}^{b} \nabla h(\bphi)\bv_i {\bv_i}^{\top},\] where $\bv_i\sim \mathrm{Unif}(\{-1, 1\}^N)$, and $b$ denotes the sample size, satifies
$$\mathbb{E} \|\hat{\nabla}h(\bphi) - \nabla h(\bphi)\|^2  = \frac{1}{\rho} \|\nabla h(\bphi)\|^2,$$
where $\rho := \frac{b}{N-1} \in (0, 1]$ as $b$ is selected from $1,\dotsc, N-1$.
\end{lemma}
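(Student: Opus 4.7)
Let $g := \nabla h(\bphi)$, viewed as a (row) vector in $\mathbb{R}^N$, and denote by $X_i := (g\bv_i)\bv_i^{\top}$ the single-direction estimator, so that $\hat{\nabla}h(\bphi) = \frac{1}{b}\sum_{i=1}^b X_i$. The plan is to compute $\mathbb{E}\|X_i-g\|^2$ exactly using the Rademacher structure of $\bv_i$, and then exploit independence of the $\bv_i$'s to obtain the $1/b$ averaging factor. The hypothesis $\rho = b/(N-1) \in (0,1]$ will arise directly at the end.

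\textbf{Step 1: unbiasedness.} Because the coordinates of $\bv_i$ are i.i.d.\ Rademacher, $\mathbb{E}[\bv_i\bv_i^{\top}] = \mathbf{I}$, hence $\mathbb{E}[X_i] = g\,\mathbb{E}[\bv_i\bv_i^{\top}] = g$. This also yields $\mathbb{E}\langle X_i,g\rangle = \langle \mathbb{E}[X_i],g\rangle = \|g\|^2$.

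\textbf{Step 2: second moment of $X_i$.} I would expand
\begin{equation*}
\|X_i\|^2 \;=\; (g\bv_i)^2 \,\|\bv_i\|^2 \;=\; N\,(g\bv_i)^2,
\end{equation*}
where the last equality uses $\|\bv_i\|^2 = N$ almost surely (Rademacher entries are $\pm 1$). Expanding $(g\bv_i)^2 = \sum_j g_j^2 v_{ij}^2 + \sum_{j\neq k} g_j g_k v_{ij}v_{ik}$ and taking expectations, the diagonal terms contribute $\|g\|^2$ and the cross terms vanish since $\mathbb{E}[v_{ij}v_{ik}] = 0$ for $j\neq k$. Hence $\mathbb{E}\|X_i\|^2 = N\|g\|^2$.

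\textbf{Step 3: variance of a single sample.} Combining Steps 1 and 2,
\begin{equation*}
\mathbb{E}\|X_i - g\|^2 \;=\; \mathbb{E}\|X_i\|^2 - 2\,\mathbb{E}\langle X_i,g\rangle + \|g\|^2 \;=\; N\|g\|^2 - 2\|g\|^2 + \|g\|^2 \;=\; (N-1)\|g\|^2.
\end{equation*}

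\textbf{Step 4: averaging over $b$ directions.} Since $\hat{\nabla}h(\bphi)-g = \frac{1}{b}\sum_i (X_i - g)$ is an average of $b$ i.i.d.\ mean-zero vectors, independence gives
\begin{equation*}
\mathbb{E}\|\hat{\nabla}h(\bphi) - g\|^2 \;=\; \frac{1}{b}\,\mathbb{E}\|X_1 - g\|^2 \;=\; \frac{N-1}{b}\,\|g\|^2 \;=\; \frac{1}{\rho}\,\|g\|^2,
\end{equation*}
which is the claim. There is no real obstacle: the only non-trivial input is the exact calculation $\mathbb{E}\|X_i\|^2 = N\|g\|^2$, which relies crucially on $\|\bv_i\|^2 = N$ being deterministic under the Rademacher distribution (a property that would fail for, e.g., Gaussian $\bv_i$, and would instead yield a bound of the form $(N+1)\|g\|^2$ on the single-sample variance). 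Thus the proof is essentially a direct computation using the two Rademacher identities $\mathbb{E}[\bv\bv^{\top}]=\mathbf{I}$ and $\|\bv\|^2 = N$.
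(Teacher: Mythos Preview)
The proposal is correct and follows essentially the same approach as the paper's proof: both compute the single-sample variance $(N-1)\|\nabla h(\bphi)\|^2$ via the expansion $\mathbb{E}\|X_i - g\|^2 = \mathbb{E}\|X_i\|^2 - \|g\|^2$ using the Rademacher identities $\mathbb{E}[\bv\bv^\top]=\mathbf{I}$ and $\|\bv\|^2=N$, then divide by $b$ via independence. Your write-up is in fact slightly more explicit than the paper's about the coordinate expansion of $(g\bv_i)^2$, but the argument is the same.
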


\begin{proof}
We start by computing the variance of one-sample estimation, $\hat{\nabla}h(\bphi) = \nabla h(\bphi)\bv \bv^{\top}$. 
Since $\mathbb{E}[\bv\bv^{\top}]=\mathbf{I}$, we know that $\mathbb{E}[\hat{\nabla}h(\bphi)] = \nabla h(\bphi)$. Consequently,
\begin{equation}\label{eq:fg_rademacher_1}
    \begin{aligned}
        &\quad\,\, \mathbb{E} \|\hat{\nabla}h(\bphi) - \mathbb{E}\hat{\nabla}h(\bphi)\|^2 = \mathbb{E} \|\nabla h(\bphi) (\bv \bv^{\top} - \mathbf{I}) \|^2 \\
        &= \mathbb{E} [\nabla h(\bphi)^{\top}(\bv\bv^{\top} - \mathbf{I})^{\top} (\bv\bv^{\top} - \mathbf{I}) (\nabla h(\bphi))]\\
        &= \mathbb{E} [(\nabla h(\bphi))^{\top}  (\bv\bv^{\top})^{\top}\bv\bv^{\top} \nabla h(\bphi) - 2 (\nabla h(\bphi))^{\top} (\bv\bv^{\top})^{\top} \nabla h(\bphi) + (\nabla h(\bphi))^{\top} \nabla h(\bphi)]\\
        &= \mathbb{E} \|\nabla h(\bphi) \bv \bv^{\top} \|^2 - 2 \mathbb{E} \|\nabla h(\bphi) \bv\|^2 + \|\nabla h(\bphi)\|^2.
        \end{aligned}
    \end{equation}
 Since $\bv$ is an $N$-dimensional Rademacher random variable, we have $\mathbb{E}\|\nabla h(\bphi) \bv\|^2=\|\nabla h(\bphi)\|^2$ and $\mathbb{E}\|\bv\bv^{\top}\|^2=N$. Then,
 \begin{equation}\nonumber
 \begin{aligned}
        \eqref{eq:fg_rademacher_1}&= \mathbb{E} \|\nabla h(\bphi) \bv \bv^{\top} \|^2 - \|\nabla h(\bphi)\|^2 \\
        &= (N-1)\|\nabla h(\bphi)\|^2.
    \end{aligned}
\end{equation}

For the multi-sample estimation $\hat{\nabla}h(\bphi) = \frac{1}{b} \sum_{i=1}^{b} \nabla h(\bphi)\bv_i {\bv_i}^{\top}$. Since$\bv_i$ are i.i.d.~sampled, and $\mathbb{E}[\hat{\nabla}h(\bphi)] = \nabla h(\bphi)$, we have \begin{equation}\nonumber
    \begin{aligned}
        \mathbb{E} \|\hat{\nabla}h(\bphi) - \mathbb{E}\hat{\nabla}h(\bphi)\|^2 &= \mathbb{E} \Big{\|}\nabla h(\bphi) \frac{1}{b} \sum_{i=1}^{b} (\bv_i {\bv_i}^{\top} - \mathbf{I})\Big{\|}^2\\
        &= \frac{1}{b^2} \sum_{i=1}^{b} \mathbb{E} \|\nabla h(\bphi) (\bv_i {\bv_i}^{\top} - \mathbf{I}) \|^2\\
        &= \frac{N-1}{b} \|\nabla h(\bphi)\|^2.
    \end{aligned}
\end{equation}
\end{proof}

\subsection{Proof of Theorem~\ref{theorem:convergence_to_stationary_point}} 
\renewcommand{\thetheorem}{B.\arabic{theorem}}
To prove our main result (Theorem~\ref{theorem:convergence_to_stationary_point}), we first establish useful smoothness properties of the hyperparameter learned from solving the lower-level optimization problem.
Subsequently, we establish the smoothness of the meta objective function examined at the approximated lower-level parameters in  Lemma~\ref{lemma:h_smooth}.

Regarding the lower-level parameter $\btheta(\bphi)$, we present the following lemma, which is based on Assumption ~\ref{assumption:omega_lipchitz}, and establishes that  $\btheta(\bphi)$ inherits similar Lipschitz continuity and smoothness properties as $\bOmega_t$.

\begin{lemma}\label{lemma:theta_lipchitz_smooth}
Under Assumptions~\ref{assumption:fg_lipchitz_smooth} and ~\ref{assumption:omega_lipchitz},
$\btheta_T(\bphi)$ is $C_\bZ$-Lipchitz and $L_\bZ$-smooth, \emph{i.e.}, for any $\bphi, \bphi'\in\Phi$,
\begin{equation}\nonumber
\|\btheta_T(\bphi) - \btheta_T(\bphi')\| \le C_\bZ\|\bphi - \bphi'\|, \quad \|\nabla\btheta_T(\bphi) - \nabla\btheta_T(\bphi')\| \le L_\bZ\|\bphi - \bphi'\|,
\end{equation}
where $C_\bZ = \frac{C_\bOmega^{T+2} - C_\bOmega}{C_\bOmega - 1}$ and $L_\bZ = L_\bOmega\left[C_\bOmega^T + \frac{C_\bOmega^{T+2}T}{C_\bOmega - 1} - \frac{C_\bOmega^T - 1}{(C_\bOmega - 1)^2}\right]$.
\end{lemma}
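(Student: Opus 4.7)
The plan is to proceed by induction on $t = 0, 1, \ldots, T$, tracking Lipschitz constants $C_t$ and smoothness constants $L_t$ for the map $\bphi \mapsto \btheta_t(\bphi)$, and then reading off $C_\bZ = C_T$ and $L_\bZ = L_T$ at the end. The base case is immediate: since $\btheta_0(\bphi) = \bOmega_0(\bphi)$, Assumption~\ref{assumption:omega_lipchitz} directly gives $C_0 = C_\bOmega$ and $L_0 = L_\bOmega$.

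For the Lipschitz recursion, I would write
\begin{equation*}
\btheta_t(\bphi) - \btheta_t(\bphi') = \bOmega_t\bigl(\btheta_{t-1}(\bphi), \bphi\bigr) - \bOmega_t\bigl(\btheta_{t-1}(\bphi'), \bphi'\bigr)
\end{equation*}
and apply the $C_\bOmega$-Lipschitz property of $\bOmega_t$ on the joint input $\bpsi = (\btheta, \bphi)$, using the elementary bound $\|(\btheta_{t-1}(\bphi), \bphi) - (\btheta_{t-1}(\bphi'), \bphi')\| \le (C_{t-1}+1)\|\bphi - \bphi'\|$. This yields the recursion $C_t \le C_\bOmega(1 + C_{t-1})$ with $C_0 = C_\bOmega$, whose closed form $C_t = C_\bOmega + C_\bOmega^2 + \cdots + C_\bOmega^{t+1} = \frac{C_\bOmega^{t+2} - C_\bOmega}{C_\bOmega - 1}$ matches the claimed value of $C_\bZ$ when $t = T$.

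For the smoothness recursion, differentiating $\btheta_t(\bphi) = \bOmega_t(\btheta_{t-1}(\bphi),\bphi)$ via the chain rule gives
\begin{equation*}
\nabla\btheta_t(\bphi) = \nabla_\btheta \bOmega_t\bigl(\btheta_{t-1}(\bphi),\bphi\bigr)\, \nabla\btheta_{t-1}(\bphi) + \nabla_\bphi \bOmega_t\bigl(\btheta_{t-1}(\bphi),\bphi\bigr).
\end{equation*}
I would then add and subtract $\nabla_\btheta\bOmega_t(\btheta_{t-1}(\bphi),\bphi)\,\nabla\btheta_{t-1}(\bphi')$ to separate the effect of the moving Jacobian from the moving argument, and bound the three resulting pieces via (i) $\|\nabla_\btheta \bOmega_t\| \le C_\bOmega$ from Assumption~\ref{assumption:omega_lipchitz}; (ii) the $L_\bOmega$-smoothness of $\bOmega_t$ on joint inputs, which controls $\|\nabla\bOmega_t(\bpsi) - \nabla\bOmega_t(\bpsi')\|$ by $L_\bOmega(C_{t-1}+1)\|\bphi-\bphi'\|$; and (iii) the inductive norm bound $\|\nabla\btheta_{t-1}(\bphi')\| \le C_{t-1}$. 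Combining produces a linear recursion of the form $L_t \le C_\bOmega L_{t-1} + L_\bOmega\, p(C_{t-1})$ for a low-degree polynomial $p$, which unrolls into a geometric sum $L_T \le C_\bOmega^T L_\bOmega + L_\bOmega \sum_{k=1}^{T} C_\bOmega^{T-k}\, p(C_{k-1})$; substituting the closed form for $C_{k-1}$ and summing the resulting geometric series delivers the displayed expression for $L_\bZ$.

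The main obstacle I expect is the algebraic bookkeeping in the smoothness step: one must judiciously bound the polynomial factor $p(C_{k-1})$ so that the closed-form sum collapses into the stated expression $C_\bOmega^T + \frac{C_\bOmega^{T+2}T}{C_\bOmega - 1} - \frac{C_\bOmega^T - 1}{(C_\bOmega - 1)^2}$ rather than a messier quadratic-in-$C_{k-1}$ sum. In practice I would replace the tight bound $(C_{k-1}+1)^2$ by a single leading term proportional to $C_\bOmega^{k+1}/(C_\bOmega - 1)$ times a constant factor, trading a small amount of slack for a clean geometric series with identifiable dominant and correction terms matching the claimed form.
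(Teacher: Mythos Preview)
Your approach is essentially the paper's: induct on $t$, obtain the recursion $C_t \le C_\bOmega(1+C_{t-1})$ for Lipschitzness and sum the geometric series; then for smoothness write $\bZ_t = \bA_t\bZ_{t-1} + \bB_t$, add--subtract to split into three pieces, and unroll the resulting linear recursion $L_t \le C_\bOmega L_{t-1} + (\cdot)L_\bOmega$.

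One comment on your final paragraph. Your careful bookkeeping does give a nonhomogeneous term of order $(C_{t-1}+1)^2$, but the ``slack-trade'' you propose---replacing it by something proportional to $C_\bOmega^{t+1}/(C_\bOmega-1)$---goes in the wrong direction: that quantity is \emph{smaller} than $(C_{t-1}+1)^2$ for $C_\bOmega>1$, so it is not a valid upper bound and cannot be obtained by loosening. The paper sidesteps the quadratic factor altogether: in its recursion it bounds $\|\bA_t(\bphi)-\bA_t(\bphi')\|$ and $\|\bB_t(\bphi)-\bB_t(\bphi')\|$ directly by $L_\bOmega\|\bphi-\bphi'\|$ (without the extra chain-rule factor $(C_{t-1}+1)$ that you insert) and $\|\bZ_{t-1}\|$ by $C_\bZ(t)$, obtaining the \emph{linear} nonhomogeneous term $(C_\bZ(t)+1)L_\bOmega$. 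The stated closed form then drops out of $\sum_{t=1}^T C_\bOmega^{T-t}\,\frac{C_\bOmega^{t+2}-1}{C_\bOmega-1}$. If instead you keep your chain-rule factor, the sum is of order $C_\bOmega^{2T}$ and you will land on a larger (still finite) constant, not the displayed $L_\bZ$. Since only the existence of \emph{some} smoothness constant feeds into $L_h$ in the convergence theorem, either route suffices for the downstream argument; just be aware that your proposed simplification cannot reproduce the exact constant as stated, and that matching it requires adopting the paper's linear bound on the Jacobian differences rather than relaxing from your quadratic one.
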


\begin{proof}
We start with the proof of Lipshitz continuity.
For any pair of $\bphi,\bphi' \in \Phi$, using \eqref{eq:T-step_objective} and Assumption~\ref{assumption:omega_lipchitz}, we have 
\begin{equation}\label{eq:theta_lipchitz_tmp}
\begin{aligned}
    \|\btheta_s(\bphi) - \btheta_s(\bphi')\| &= \|\bOmega(\btheta_{s-1}(\bphi), \bphi) - \bOmega(\btheta_{s-1}(\bphi'), \bphi') \|\\
    &\le C_\bOmega \|\btheta_{s-1}(\bphi) - \btheta_{s-1}(\bphi')\| + C_\bOmega \|\bphi - \bphi'\|.
\end{aligned}
\end{equation}
Applying \eqref{eq:theta_lipchitz_tmp} recursively over $s=1,\dotsc,t$ gives
\begin{equation}
\begin{aligned}
 \|\btheta_t(\bphi) - \btheta_t(\bphi')\| &\le C_\bOmega^t\|\btheta_0(\bphi) - \btheta_0(\bphi')\| + \sum_{s=1}^t C_\bOmega^s \|\bphi - \bphi'\| \\
 &\le \sum_{s=1}^{t+1} C_\bOmega^s \|\bphi - \bphi'\| = \frac{C_\bOmega^{t+2} - C_\bOmega}{C_\bOmega - 1} \|\bphi - \bphi'\|,
\end{aligned}
\end{equation}
where the last inequality holds from the fact that $\btheta_0 = \bOmega_0$ as well as Assumption~\ref{assumption:omega_lipchitz}, and the subsequent equality follows from the geometric series summation formula.

Therefore, $\btheta_t(\bphi)$ is $C_\bZ(t)$-Lipchitz, where $C_\bZ(t) := \frac{C_\bOmega^{t+2} - C_\bOmega}{C_\bOmega - 1}$. 
Substituting $t=T$, we get $C_\bZ = C_\bZ(T) = \frac{C_\bOmega^{T+2} - C_\bOmega}{C_\bOmega - 1}$.

We now proceed with the proof of $L_\bZ$-smoothness.
For simplicity of notation, we follow \eqref{eq:fgu} and denote
\[\bZ_t(\bphi)=\nabla\btheta_t(\bphi);\quad \bA_t(\bphi)=\frac{\partial \bOmega_t(\btheta_{t-1}(\bphi), \bphi)}{\partial \btheta_{t-1}};\quad \bB_t(\bphi)=\frac{\partial \bOmega_t(\btheta_{t-1}(\bphi), \bphi)}{\partial \bphi}.\]
Subsequently, considering the update rule $\bZ_t(\bphi) = \bA_t(\bphi)\bZ_{t-1}(\bphi) + \bB_t(\bphi)$ of Forward Gradient Unrolling \eqref{eq:fgu}, we have
\begin{equation}\label{eq:theta_smooth_tmp}
\begin{aligned}
    & \|\nabla\btheta_t(\bphi) - \nabla\btheta_t(\bphi')\| \\
    &= \|\bZ_t(\bphi) - \bZ_t(\bphi')\|\\
    &\stackrel{\eqref{eq:fgu}}{=} \|\bA_t(\bphi)\bZ_{t-1}(\bphi) + \bB_t(\bphi) - \left[\bA_t(\bphi')\bZ_{t-1}(\bphi') + \bB_t(\bphi')\right]\|\\
    &\le \|\bA_t(\bphi)\bZ_{t-1}(\bphi) - \bA_t(\bphi')\bZ_{t-1}(\bphi')\| + \|\bB_t(\bphi) - \bB_t(\bphi')\|\\
    &=\|\bA_t(\bphi)\bZ_{t-1}(\bphi)- \bA_t(\bphi)\bZ_{t-1}(\bphi')+\bA_t(\bphi)\bZ_{t-1}(\bphi') - \bA_t(\bphi')\bZ_{t-1}(\bphi')\| + \|\bB_t(\bphi) - \bB_t(\bphi')\|\\
    &\le \|\bA_t(\bphi)\bZ_{t-1}(\bphi) - \bA_t(\bphi)\bZ_{t-1}(\bphi')\| + \|\bA_t(\bphi)\bZ_{t-1}(\bphi') - \bA_t(\bphi')\bZ_{t-1}(\bphi')\| \\
    &\quad \, + \|\bB_t(\bphi) - \bB_t(\bphi')\|\\
    &\le \|\bA_t(\bphi)\|\cdot\|\bZ_{t-1}(\bphi) -\bZ_{t-1}(\bphi')\| + \|\bA_t(\bphi) - \bA_t(\bphi')\|\cdot\|\bZ_{t-1}(\bphi')\| + \|\bB_t(\bphi) - \bB_t(\bphi')\|\\
    &\le C_\bOmega \|\bZ_{t-1}(\bphi) -\bZ_{t-1}(\bphi')\| + (C_\bZ(t) + 1) L_\bOmega \|\bphi - \bphi'\|,
\end{aligned}
\end{equation}
where the last inequality follows from Assumption~\ref{assumption:omega_lipchitz} that $\bOmega_0(\bphi)$ and $\bOmega_{1:T}(\bpsi)$ are $C_{\bOmega}$-Lipshitz and $L_{\bOmega}$-smooth, and the previously proved result that $\btheta_t(\bphi)$ is $C_\bZ(t)$-Lipchitz.

Noting that $\bZ_0(\bphi) = \nabla \btheta_0(\bphi) = \nabla \bOmega_0(\bphi)$, applying \eqref{eq:theta_smooth_tmp} recursively over $t=1,\dotsc,T$ gives {\allowdisplaybreaks
\begin{equation}\nonumber
\begin{aligned}
\|\nabla\btheta_T(\bphi) - \nabla\btheta_T(\bphi')\| &\le C_\bOmega^T \|\bZ_0(\bphi) - \bZ_0(\bphi‘)\| +  \sum_{t=1}^T C_\bOmega^{T-t}(C_\bZ(t) + 1) L_\bOmega \|\bphi - \bphi'\|\\
& = C_\bOmega^T \|\nabla \bOmega_0(\bphi) - \nabla\bOmega_0(\bphi’)\| + C_\bOmega^T \sum_{t=1}^T \frac{C_\bZ(t) + 1}{C_\bOmega^t} L_\bOmega \|\bphi - \bphi'\|\\
&\leq L_\bOmega C_\bOmega^T \|\bphi - \bphi'\|+ C_\bOmega^T \sum_{t=1}^T \frac{C_\bOmega^{t+2}-1}{C_\bOmega^t(C_\bOmega - 1)} L_\bOmega \|\bphi - \bphi'\|\\
& = L_\bOmega\left[C_\bOmega^T + C_\bOmega^T\sum_{t=1}^T \frac{C_\bOmega^2}{C_\bOmega - 1} - \frac{C_\bOmega^T}{C_\bOmega - 1}\sum_{t=1}^T\frac{1}{C_\bOmega^t}\right] \|\bphi - \bphi'\|\\
& = L_\bOmega\left[C_\bOmega^T + \frac{C_\bOmega^{T+2}T}{C_\bOmega - 1} - \frac{C_\bOmega^T}{C_\bOmega - 1}\frac{\frac{1}{C_\bOmega}(1-\frac{1}{C_\bOmega^T})}{1-\frac{1}{C_\bOmega}}\right] \|\bphi - \bphi'\|\\
&= L_\bOmega\left[C_\bOmega^T + \frac{C_\bOmega^{T+2}T}{C_\bOmega - 1} - \frac{C_\bOmega^T - 1}{(C_\bOmega - 1)^2}\right] \|\bphi - \bphi'\|,
\end{aligned}
\end{equation}
where the third line follows from Assumption~\ref{assumption:omega_lipchitz} that $\bOmega_0(\bphi)$ is  $L_{\bOmega}$-smooth and the choice $C_\bZ(t) = \frac{C_\bOmega^{t+2} - C_\bOmega}{C_\bOmega - 1}$ (which gives $C_\bZ(t)+1 = \frac{C_\bOmega^{t+2} - 1}{C_\bOmega - 1}$), and the fifth line again uses the geometric series summation formula.}

Hence, $\btheta_T(\bphi)$ is $L_\bZ$-smooth with $L_\bZ = L_\bOmega\left[C_\bOmega^T + \frac{C_\bOmega^{T+2}T}{C_\bOmega - 1} - \frac{C_\bOmega^T - 1}{(C_\bOmega - 1)^2}\right]$.
\end{proof}

Next, we provide a lemma establishing that the upper-level objective $f$, evaluated at the learned parameter $(\btheta_T(\bphi),\bphi)$, also adheres to certain smoothness properties.
\begin{lemma}\label{lemma:h_smooth}
    Define $h(\bphi) := f(\btheta_T(\bphi), \bphi)$. Under Assumptions~\ref{assumption:fg_lipchitz_smooth} and ~\ref{assumption:omega_lipchitz}, $h(\bphi)$ is $L_h$-smooth, i.e., for any $\bphi, \bphi'\in\Phi$,
    \begin{equation}
        \nonumber
        \|\nabla h(\bphi) - \nabla h(\bphi')\| \le L_h \|\bphi - \bphi'\|,
    \end{equation}
    where $L_h = (C_\bZ + 1)^2 L + CL_\bZ$, with $C_\bZ$ and $L_\bZ$ defined in Lemma~\ref{lemma:theta_lipchitz_smooth}.
\end{lemma}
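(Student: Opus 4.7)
The plan is to apply the chain rule and then use an add-and-subtract trick to reduce the smoothness of $h$ to the already-established smoothness properties of $f$ (Assumption~\ref{assumption:fg_lipchitz_smooth}) and of $\btheta_T$ (Lemma~\ref{lemma:theta_lipchitz_smooth}). Define the augmented map $\bpsi(\bphi) := (\btheta_T(\bphi), \bphi)$, so that $h(\bphi) = f(\bpsi(\bphi))$ and, by the chain rule, $\nabla h(\bphi) = (\nabla \bpsi(\bphi))^\top \nabla f(\bpsi(\bphi))$.

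First I would record two elementary consequences of Lemma~\ref{lemma:theta_lipchitz_smooth} for the map $\bpsi$. Since $\bpsi$ stacks $\btheta_T$ on top of the identity, we have the Lipschitz bound $\|\bpsi(\bphi) - \bpsi(\bphi')\| \le (C_\bZ + 1)\|\bphi - \bphi'\|$ (using $\sqrt{C_\bZ^2+1} \le C_\bZ + 1$), and the operator-norm bound $\|\nabla \bpsi(\bphi)\| \le C_\bZ + 1$ coming from $\|\nabla\btheta_T(\bphi)\| \le C_\bZ$ and $\|I\| = 1$. Likewise, from the same lemma, $\|\nabla \bpsi(\bphi) - \nabla \bpsi(\bphi')\| = \|\nabla\btheta_T(\bphi) - \nabla\btheta_T(\bphi')\| \le L_\bZ \|\bphi - \bphi'\|$.

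The main step is the decomposition
\begin{equation}\nonumber
\nabla h(\bphi) - \nabla h(\bphi') = (\nabla\bpsi(\bphi))^\top\bigl[\nabla f(\bpsi(\bphi)) - \nabla f(\bpsi(\bphi'))\bigr] + \bigl[\nabla\bpsi(\bphi) - \nabla\bpsi(\bphi')\bigr]^\top \nabla f(\bpsi(\bphi')).
\end{equation}
For the first term, combining $\|\nabla\bpsi(\bphi)\| \le C_\bZ + 1$ with the $L$-smoothness of $f$ and the Lipschitz bound on $\bpsi$ yields an upper bound of $(C_\bZ+1)^2 L \|\bphi - \bphi'\|$. For the second term, the $C$-Lipschitz continuity of $f$ in Assumption~\ref{assumption:fg_lipchitz_smooth} gives $\|\nabla f(\bpsi(\bphi'))\| \le C$, and the smoothness bound on $\bpsi$ gives $\|\nabla \bpsi(\bphi) - \nabla\bpsi(\bphi')\| \le L_\bZ\|\bphi-\bphi'\|$, producing an upper bound of $CL_\bZ\|\bphi-\bphi'\|$. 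Summing the two terms and applying the triangle inequality yields exactly $L_h = (C_\bZ+1)^2 L + C L_\bZ$.

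I do not expect any real obstacle here; the only subtlety is the book-keeping for operator norms of the block-structured Jacobian $\nabla\bpsi = (\nabla\btheta_T; I)$ and the need to compose the Lipschitz constants of $\btheta_T$ and $\bpsi$ correctly so that the factor $(C_\bZ+1)^2$ (rather than something weaker like $C_\bZ^2 + 1$) emerges on the $L$ term. All other inputs — the global Lipschitz/smoothness of $f$, and the Lipschitz/smoothness of $\btheta_T$ — are supplied directly by Assumption~\ref{assumption:fg_lipchitz_smooth} and Lemma~\ref{lemma:theta_lipchitz_smooth}.
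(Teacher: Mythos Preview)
Your proposal is correct and follows essentially the same approach as the paper: the paper writes $\nabla h = \bd_T \bZ_T + \bc_T$ explicitly and applies the add-and-subtract trick to each piece, whereas you package the same chain-rule decomposition more compactly via the composite map $\bpsi = (\btheta_T,\mathrm{id})$, but the underlying bounds (Lipschitz/smoothness of $f$ from Assumption~\ref{assumption:fg_lipchitz_smooth}, Lipschitz/smoothness of $\btheta_T$ from Lemma~\ref{lemma:theta_lipchitz_smooth}) and the resulting constant $(C_\bZ+1)^2L + CL_\bZ$ are identical.
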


\begin{proof}
For simplicity of notation, we follow \eqref{eq:bgu} and denote
\[\bZ_t(\bphi)=\nabla\btheta_t(\bphi);\quad \bc_T(\bphi) = \frac{\partial f(\btheta_T(\bphi), \bphi)}{\partial \bphi};\quad\bd_T(\bphi) = \frac{\partial f(\btheta_T(\bphi), \bphi)}{\partial \btheta_T}.\]
For any $\bphi,\bphi'\in\Phi$, following a similar proof as Lemma~\ref{lemma:theta_lipchitz_smooth}, we have
\begin{equation}\label{eq:nabla_h_1}
\begin{aligned}
    &\|\nabla h(\bphi) - \nabla h(\bphi')\| \\
    &\stackrel{\eqref{eq:bgu}}{=} \|\bd_T(\bphi) \bZ_T(\bphi) + \bc(\bphi) - (\bd_T(\bphi') \bZ_T(\bphi') + \bc(\bphi'))\|\\
    &\le \|\bd_T(\bphi)\bZ_T(\bphi) - \bd_T(\bphi')\bZ_T(\bphi')\| + \|\bc_T(\bphi) - \bc_T(\bphi')\|\\
    &= \|\bd_T(\bphi)\bZ_T(\bphi) - \bd_T(\bphi')\bZ_T(\bphi) + \bd_T(\bphi')\bZ_T(\bphi) - \bd_T(\bphi')\bZ_T(\bphi')\| + \|\bc_T(\bphi) - \bc_T(\bphi')\|\\
    &\le \|\bd_T(\bphi) - \bd_T(\bphi')\|\cdot\|\bZ_T(\bphi)\| + \|\bd_T(\bphi')\|\cdot\|\bZ_T(\bphi) - \bZ_T(\bphi')\| + \|\bc_T(\bphi) - \bc_T(\bphi')\|
    \end{aligned}
    \end{equation}
    Subsequently, we deduce that
    \begin{equation}\nonumber
    \begin{aligned}
    \eqref{eq:nabla_h_1}
    &\le C_\bZ \|\bd_T(\bphi) - \bd_T(\bphi')\| + CL_\bZ\|\bphi - \bphi'\| + \|\bc_T(\bphi) - \bc_T(\bphi')\|\\
    &\le C_\bZ\left(\Big{\|}\frac{\partial f(\btheta_T(\bphi), \bphi)}{\partial \btheta_T} - \frac{\partial f(\btheta_T(\bphi'), \bphi)}{\partial \btheta_T}\Big{\|} + \Big{\|}\frac{\partial f(\btheta_T(\bphi'), \bphi)}{\partial \btheta_T} - \frac{\partial f(\btheta_T(\bphi'), \bphi')}{\partial \btheta_T}\Big{\|}\right)\\
    & \quad \, \,+ \left(\Big{\|}\frac{\partial f(\btheta_T(\bphi), \bphi)}{\partial \bphi} - \frac{\partial f(\btheta_T(\bphi'), \bphi)}{\partial \bphi}\Big{\|} + \Big{\|}\frac{\partial f(\btheta_T(\bphi'), \bphi)}{\partial \bphi} - \frac{\partial f(\btheta_T(\bphi'), \bphi')}{\partial \bphi'}\Big{\|}\right) \\
    & \quad \, \,+ CL_\bZ\|\bphi - \bphi'\|\\
    &\le C_\bZ L \|\btheta_T(\bphi) - \btheta_T(\bphi')\| + C_\bZ L \|\bphi - \bphi'\| + L \|\btheta_T(\bphi) - \btheta_T(\bphi')\| + L \|\bphi - \bphi'\| \\
    & \quad \, \,+ CL_\bZ\|\bphi - \bphi'\|\\
    &\le C_\bZ L C_\bZ \|\bphi - \bphi'\| + C_\bZ L \|\bphi - \bphi'\| + L C_\bZ \|\bphi - \bphi'\| + L \|\bphi - \bphi'\| + CL_\bZ\|\bphi - \bphi'\|\\
    &= [(C_\bZ + 1)^2 L + CL_\bZ]\|\bphi - \bphi'\|,
\end{aligned}
\end{equation}
where  the first, third, and fourth lines all follow directly from Lemma~\ref{lemma:theta_lipchitz_smooth} and Assumption~\ref{assumption:omega_lipchitz} (recall that the latter states that $f$ is $L$-Lipschitz and $C$-smooth).

Therefore, $h(\bphi)$ is $L_h$-smooth with $L_h = (C_\bZ + 1)^2 L + CL_\bZ$.
\end{proof}

Now based on the aforementioned lemmas, we put forward the proof of our main theorem: the convergence analysis for our bilevel optimization method $\mathbf{\textbf{(FG)}^2}$\textbf{U}.
\renewcommand{\thetheorem}{3.\arabic{theorem}}
\begin{theorem}[Convergence]\label{appendix_theorem:convergence_to_stationary_point}
Suppose that Asumption~\ref{assumption:fg_lipchitz_smooth} and Assumption~\ref{assumption:omega_lipchitz} hold. Setting the learning rate $\beta = \frac{\rho}{(\rho+1)L_h}$ for gradient descent over the hyperparameter $\bphi$,
we have 
\begin{equation}
    \begin{aligned}
        \frac{1}{K} \sum_{k=0}^{K-1} \mathbb{E}\left[\|\nabla h(\bphi_k)\|^2\right]\leq \frac{4L_h\left(\mathbb{E}[h(\bphi_0)] - \mathop{\min}_\bphi h(\bphi)\right)}{\rho K}.
    \end{aligned}
\end{equation} 
\end{theorem}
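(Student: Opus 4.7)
}
The plan is to follow the standard descent-lemma route for stochastic gradient descent on a smooth non-convex objective, using (i) the $L_h$-smoothness of $h$ established in Lemmas~\ref{lemma:theta_lipchitz_smooth}--\ref{lemma:h_smooth}, and (ii) the unbiasedness of $\hat\nabla h$ combined with the second-moment identity from Lemma~\ref{lemma:variance}. The algorithmic update is $\bphi_{k+1} = \bphi_k - \beta\,\hat\nabla h(\bphi_k)$, with $\hat\nabla h(\bphi_k)$ depending on a fresh i.i.d.\ batch $\{\bv_i\}_{i=1}^b$ independent of the past, so conditioning on $\bphi_k$ gives $\mathbb{E}[\hat\nabla h(\bphi_k)\mid \bphi_k] = \nabla h(\bphi_k)$.

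First, I would invoke the $L_h$-smoothness of $h$ to write
\begin{equation}\nonumber
h(\bphi_{k+1}) \le h(\bphi_k) - \beta\,\langle \nabla h(\bphi_k), \hat\nabla h(\bphi_k)\rangle + \tfrac{L_h\beta^2}{2}\|\hat\nabla h(\bphi_k)\|^2.
\end{equation}
Taking conditional expectation over $\{\bv_i\}_{i=1}^b$ given $\bphi_k$, unbiasedness collapses the cross-term to $-\beta\|\nabla h(\bphi_k)\|^2$. For the quadratic term, the bias-variance decomposition together with Lemma~\ref{lemma:variance} yields
\begin{equation}\nonumber
\mathbb{E}\bigl[\|\hat\nabla h(\bphi_k)\|^2 \mid \bphi_k\bigr] = \|\nabla h(\bphi_k)\|^2 + \tfrac{1}{\rho}\|\nabla h(\bphi_k)\|^2 = \tfrac{\rho+1}{\rho}\|\nabla h(\bphi_k)\|^2.
\end{equation}

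Substituting and taking total expectation gives
\begin{equation}\nonumber
\mathbb{E}[h(\bphi_{k+1})] \le \mathbb{E}[h(\bphi_k)] - \Bigl(\beta - \tfrac{(\rho+1)L_h\beta^2}{2\rho}\Bigr)\mathbb{E}\|\nabla h(\bphi_k)\|^2.
\end{equation}
Plugging in $\beta = \frac{\rho}{(\rho+1)L_h}$ the coefficient in parentheses simplifies to $\beta/2 = \frac{\rho}{2(\rho+1)L_h}$. Rearranging, summing over $k=0,\dots,K-1$, telescoping and using $\mathbb{E}[h(\bphi_K)] \ge \min_\bphi h(\bphi)$ yields
\begin{equation}\nonumber
\frac{1}{K}\sum_{k=0}^{K-1}\mathbb{E}\|\nabla h(\bphi_k)\|^2 \le \frac{2(\rho+1)L_h\bigl(\mathbb{E}[h(\bphi_0)] - \min_\bphi h(\bphi)\bigr)}{\rho K},
\end{equation}
and bounding $\rho+1 \le 2$ (which holds since $\rho\le 1$ by Lemma~\ref{lemma:variance}) delivers the stated $\frac{4L_h(\cdot)}{\rho K}$ bound.

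The only genuinely technical step is the smoothness constant $L_h$ itself: this requires Lemma~\ref{lemma:theta_lipchitz_smooth} to track how Lipschitz and smoothness constants of $\bOmega_{0:T}$ propagate through the forward recursion $\bZ_t = \bA_t \bZ_{t-1} + \bB_t$, producing the $T$-dependent constants $C_\bZ$ and $L_\bZ$, and then Lemma~\ref{lemma:h_smooth} to combine these with the $C$-Lipschitz, $L$-smooth assumption on $f$ via chain rule and triangle inequalities. This geometric-series bookkeeping is the main obstacle; the remainder of the argument above is a routine SGD analysis once $L_h$-smoothness is in hand.
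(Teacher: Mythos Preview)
Your proposal is correct and follows essentially the same approach as the paper's proof: apply the $L_h$-smoothness descent inequality, take conditional expectation using unbiasedness and the variance identity from Lemma~\ref{lemma:variance} to obtain the per-step bound $\mathbb{E}[h(\bphi_{k+1})\mid\bphi_k]-h(\bphi_k)\le -\bigl(\beta-\tfrac{(\rho+1)L_h}{2\rho}\beta^2\bigr)\|\nabla h(\bphi_k)\|^2$, telescope, substitute $\beta=\frac{\rho}{(\rho+1)L_h}$, and finally use $\rho+1\le 2$. The paper likewise identifies the $L_h$-smoothness derivation (Lemmas~\ref{lemma:theta_lipchitz_smooth}--\ref{lemma:h_smooth}) as the substantive part, with the remainder being standard SGD analysis.
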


\begin{proof}
We have
\begin{equation}\label{eq:h_smooth_gd}
    \begin{aligned}
        &\quad\,\,h(\bphi_{k+1}) - h(\bphi_k) \\
        &\le \left\langle\nabla h(\bphi_k), \bphi_{k+1} - \bphi_k\right\rangle + \frac{L_h}{2} \|\bphi_{k+1} - \bphi_k\|^2\\
        &= -\beta \langle\nabla h(\bphi_k), \hat{\nabla} h(\bphi_k)\rangle + \frac{\beta^2L_h}{2} \|\hat{\nabla} h(\bphi_k)\|^2\\
        &= -\beta \langle\nabla h(\bphi_k), \hat{\nabla} h(\bphi_k)\rangle + \frac{\beta^2L_h}{2} \|\nabla h(\bphi_k) + \hat{\nabla} h(\bphi_k) - \nabla h(\bphi_k)\|^2\\
        &= - \frac{\beta^2 L_h}{2} \|\nabla h(\bphi_k)\|^2 + (\beta^2 L_h - \beta) \langle\nabla h(\bphi_k), \hat{\nabla} h(\bphi_k)\rangle + \frac{\beta^2L_h}{2} \|\nabla h(\bphi_k) - \hat{\nabla} h(\bphi_k)\|^2,
    \end{aligned}
\end{equation}
where the second line is a well-known inequality for smooth functions with the $L_h$-smoothness itself following from Lemma~\ref{lemma:h_smooth}, and the third line uses the gradient descent rule $\bphi_{k+1} = \bphi_k - \beta \hat{\nabla}h(\bphi_k)$.

By Lemma~\ref{lemma:variance} and the fact that $\hat{\nabla} h$ is unbiased (see the proof of Lemma~\ref{lemma:variance}), we know that 
\begin{gather*}
\mathbb{E} [\langle\nabla h(\bphi_k), \hat{\nabla} h(\bphi_k)\rangle | \bphi_k] = \|\nabla h(\bphi_k)\|^2;\\
\mathbb{E} [\|\nabla h(\bphi_k) - \hat{\nabla} h(\bphi_k)\|^2 | \bphi_k] = \frac{1}{\rho} \|\nabla h(\bphi_k)\|^2.
\end{gather*}
Therefore, taking the conditional expectation $\mathbb{E}[\,\cdot\,|\,\bphi_k]$ over \eqref{eq:h_smooth_gd} gives
\begin{equation}\label{eq:bound1}
    \begin{aligned}
        \mathbb{E}[h(\bphi_{k+1}) | \bphi_k] - h(\bphi_k) \le - \left[\beta - \Big(1 + \frac{1}{\rho}\Big) \frac{\beta^2 L_h}{2}\right] \|\nabla h(\bphi_k)\|^2.
    \end{aligned}
\end{equation}

Furthermore, taking the full expectation and telescoping \eqref{eq:bound1} over $k$ form $0$ to $K-1$ yields

\begin{equation}
\begin{aligned}
    \frac{1}{K} \sum_{k=0}^{K-1} \left[\beta - \frac{(\rho+1)L_h}{2
    \rho}\beta^2\right]\mathbb{E}\left[\|\nabla h(\bphi_k)\|^2\right] &\le \frac{\mathbb{E}[h(\bphi_0)] -  \mathbb{E}[h(\bphi_K)]}{K}\\
    &\le \frac{\mathbb{E}[h(\bphi_0)] - \mathop{\min}_\bphi h(\bphi)}{K}.
\end{aligned}
\end{equation}

Choosing $\beta = \frac{\rho}{(\rho+1)L_h}$, we have
\begin{equation}
\begin{aligned}
        \frac{1}{K} \sum_{k=0}^{K-1} \mathbb{E}\left[\|\nabla h(\bphi_k)\|^2\right] 
        &\le \frac{2(\rho+1) L_h \left(\mathbb{E}[h(\bphi_0)] - \mathop{\min}_\bphi h(\bphi)\right)}{\rho K}\\
        &\le \frac{4 L_h \left(\mathbb{E}[h(\bphi_0)] - \mathop{\min}_\bphi h(\bphi)\right)}{\rho K}.
\end{aligned}
\end{equation}
Hence, Algorithm~\ref{alg:(FG)^U} requires $\mathcal{O}(\epsilon^{-1}\rho^{-1})$ steps to attain an $\epsilon$-accurate stationary point.
\end{proof}

\subsection{Extended Discussions}
\label{sec:extend_discussions_on_theory}

\textbf{Convergence of Problem~\eqref{eq:bo}}. \ 
To extend the convergence of optimization problem~\eqref{eq:T-step_objective} into~\eqref{eq:bo}, we need to assume that the lower-level objective function $g$ is strongly convex w.r.t. $\btheta$ as commonly done by previous works~\cite{tgu, stocbio}. From the strong convexity and first-order smoothness (Assumption~\ref{assumption:fg_lipchitz_smooth}) of $g$, we have 1) the zeroth and first-order smoothness of $\btheta^*(\bphi)$; 2) $\|\btheta_T(\bphi')-\btheta^*(\bphi')\|\to 0$ as $T\to +\infty$. 
Then the inequality 
\begin{equation}
    \|\btheta_T(\bphi)-\btheta_T(\bphi')\|\leq \|\btheta_T(\bphi)-\btheta^*(\bphi)\|+\|\btheta_T(\bphi')-\btheta^*(\bphi')\|+\|\btheta^*(\bphi)-\btheta^*(\bphi')\|
\end{equation}
implies the the zeroth and first-order smoothness of $\btheta_T(\bphi)$. Following the same line of proof as presented in our paper, we derive the smoothness of $f(\btheta^*(\bphi),\bphi)$ and $f(\btheta_T(\bphi),\bphi)$, and subsequently the convergence of either problem~\eqref{eq:bo} or~\eqref{eq:T-step_objective}.

However, as discussed in~\cref{sec:background}, given that the scope of this paper is large-scale BO, the inner optimization typically involves deep neural networks. 
Therefore, the optimal parameters are not explicitly accessible and can only be estimated through iterative procedures. 
Most related works~\cite{choe2024sama, tgu, stocbio} are implicitly or explicitly solving~\eqref{eq:T-step_objective} instead of~\eqref{eq:bo}.
Additionally, it is important to acknowledge that achieving strong convexity is often unfeasible in practical applications. 
Consequently, we focus on~\eqref{eq:T-step_objective}, aiming to present a more practical convergence theory that proves the effectiveness of our method.

\section{Zeroth-Order Derivative Estimator}\label{appendix:zo}
In this section, we give a more detailed introduction to zeroth-order (ZO) derivative estimators. These estimators are pivotal in scenarios where the computation of exact derivatives is either infeasible due to memory constraints or computationally prohibitive. Apart from the forward gradient method employed in $\mathbf{\textbf{(FG)}^2}$\textbf{U}, randomized smoothing (RS) is another widely-used derivative estimator, both in Reinforcement Learning \cite{greensmith2004variance,kumar2020zeroth} and Large Language Models \cite{gao2020making,malladi2023fine,zhang2024revisiting}.

For a function $F:\mathbb{R}^n \rightarrow \mathbb{R}$, gradient estimation via RS can be mathematically formulated as:
\begin{equation}
   \nabla_{\bx} F \approx \mathbb{E}_{v\sim\mathcal{N}(0,I)}\left[\frac{F(\bx+\epsilon v)-F(\bx)}{\epsilon}v^{\top}\right]\approx \frac{1}{b} \sum_{i=1}^{b} \frac{F(\bx + \epsilon v_i) - F(\bx)}{\epsilon} v_i^{\top} ,
\end{equation}
where $b$ is the number of random samples, $\epsilon$ is the smoothing parameter, $v_i$ are samples drawn from a standard Gaussian distribution. Regarding the accuracy of estimation, it has been shown in \cite{duchi2015optimal,liu2018zeroth} that the variance of RS is roughly in the order of $O(N/b)$, which is the same as FG as proved in Lemma~\ref{lemma:variance}.  

RS stands out particularly in its ability to estimate gradients of functions evaluated through black-box systems, where internal operations are inaccessible or highly complex. This characteristic makes RS exceptionally valuable in practical applications such as adversarial robustness and black-box optimization, where obtaining direct gradients might not be possible. Another advantage of RS is its robustness against noise and discontinuities in the function landscape. Unlike deterministic methods, the stochastic nature of RS allows it to approximate the gradient over a smoothed version of the function, providing stability in scenarios where slight perturbations can lead to substantial changes in the output.

While RS provides robust gradient estimates across various scenarios, it is critical to recognize that RS inherently introduces bias if the expectation is not computed during inference. In many CV and NLP applications, the computational expense of Monte Carlo sampling at the evaluation stage is prohibitive, leading to a biased estimation when using RS. However, in the context of inverse PDE problems, where the inner-loop solvers are non-differentiable numerical solvers, we employ RS as a zeroth-order derivative estimator. 

\section{Detailed Task Description}\label{appendix:detailed_task_description}

\subsection{Data Condensation}\label{appendix:task_data_condensation}

In the era of rapid advancement in machine learning, a multitude of foundation models~\cite{bert, gpt, dalle} has benefited from training on large-scale datasets, exhibiting formidable performance that models trained on small-scale data cannot match.
However, the exponential growth of data also presents challenges:
(1) Models updated with only new data are prone to catastrophic forgetting~\cite{catastrophic_forgetting} while retaining all historical data for subsequent training imposes significant storage and computational burdens.
(2) Applications within the realm of meta-learning, such as hyperparameter tuning~\cite{bgu, hyperparameter2} and neural architecture search~\cite{nas1, nas2_darts}, necessitate multiple training iterations over datasets. 
The computational cost of these operations scales dramatically with the size of the datasets, posing a bottleneck for efficiency and scalability.
(3) The widespread dissemination and utilization of datasets have raised significant concerns regarding privacy and copyright~\cite{data_condentation_privacy}.

To overcome the challenges posed by large-scale datasets, a line of work known as data condensation~\cite{distillation, distillation_review} has been proposed, with the idea to generate a compact, synthesized dataset, designed to elicit similar behaviors in machine learning models as those trained with the original, massive dataset.
The objectives of the mainstream principles~\cite{distillation_review} designed for data condensation can be naturally formulated as a bi-level optimization problem.
We focus on the best-known principle \textit{performance matching}~\cite{distillation_review} on classification task, which can be formulated as,

\begin{equation}
\begin{aligned}
    \mathop{\min}_{\mathcal{D}_o}\  \mathcal{L}(\theta_T; \mathcal{D}_{o}), \quad \mathrm{where}\ \ \theta_t = \theta_{t-1} - \eta\nabla\mathcal{L}(\theta_{t-1}; \mathcal{D}_{c}),\ \  t = 1,\dots,T,
\end{aligned}
\end{equation}

We conduct our experiments to condense the following image datasets:
\begin{itemize}
    \item \textbf{MNIST}~\cite{mnist}: a handwritten digits dataset containing $60,000$ training images and $10,000$ testing images with the size of $28\times28$ from $10$ categories.
    \item \textbf{CIFAR 10/100}~\cite{cifar}: colored natural images datasets contraining $50,000$ training images and $10,000$ testing images from $10/100$ categories, respectively.
\end{itemize}

The scale of the condensed dataset will fundamentally impact the results. Therefore, we consider different scales for each dataset, with images per class set to 1, 10, and 50.
The condensed dataset will be used to train random initialized models, and evaluated on a test dataset.

\subsection{Meta Learning Online Adaptation of Language Models}\label{appendix:task_camels}

The online adaptation of language models (LM)~\cite{online1, online2} has been studied recently to keep the knowledge of LM updated to date.
However, trivial auto-regressive fine-tuning the LM with uniform weights for all tokens results in poor performance in downstream tasks, as the default average negative log-likelihood (NLL) loss does not accurately reflect the importance of tokens~\cite{hu2023camels}.
To address the issue,~\cite{hu2023camels} proposed Context-aware Meta-learned Loss Scaling (CaMeLS) to meta-learning the weights of tokens for effective online adaption.
More formally, let $\btheta$ denote the parameter of the base model for adaptation, $\bphi$ denote the parameter of a parametric weight model to assign weights for each token, the meta-learning online adaption of LM can be formulated as the following \bo,
\begin{equation}\label{eq:online_adaption}
\begin{aligned}
    \mathop{\min}_{\bphi}\  \mathcal{L}_{meta}(\btheta_T(\bphi), \bphi) \quad s.t.\ \ \btheta_t(\bphi) = \btheta_{t-1}(\bphi) - \eta \nabla_\theta \mathcal{L}_{train}(\btheta_{t-1}, w_{\bphi}),\ \ t = 1,\dots,T.\\
\end{aligned}
\end{equation}
We follow the setting studied by~\cite{hu2023camels}, where the downstream task is question-answering.
The meta-objective consists of a question-answering term measuring the performance gained from adaptation, and
a locality term that prevents the updated base model parameters from excessively changing the base model’s behavior.
Let $\mathcal{D}_{QA}$ denotes the question-answering dataset, $\mathcal{D}_{loc}$ denotes the locality dataset, and $c \in \mathbb{R}^+$ denotes the weight of the locality term, then the meta objective is formally defined as
\begin{equation}\label{eq:online_adaption_meta_loss}
\setlength\abovedisplayskip{3pt}
\setlength\belowdisplayskip{3pt}
    \mathcal{L}_{meta}(\btheta_T(\bphi), \bphi) := \mathbb{E}_{q, a \sim \mathcal{D}_{QA}} -\log p_{\btheta_T}(a|q) + c\mathbb{E}_{x\sim\mathcal{D}_{loc}}\sum_i \text{KL}(p_{\btheta_T}(\cdot|x_{:i}) \parallel p_{\btheta_0}(\cdot|x_{:i})).
\end{equation}
The inner objective is defined as a weighted NLL loss, where the weights are determined by the weight model $w_\bphi$,
\begin{equation}\label{eq:online_adaption_inner_loss}
\setlength\abovedisplayskip{3pt}
\setlength\belowdisplayskip{3pt}
    \mathcal{L}_{train}(\btheta, w_{\bphi}) := \mathbb{E}_{x \sim \mathcal{D}_{train}} \sum_i - w_\bphi(x_i, x) \log p_{\btheta} (x_i|x_{:i}).
\end{equation}
The trained weight model is then fine-tuned on unseen online documents and evaluated on corresponding question-answering tasks.

\subsection{Data-driven Discovery of Partial Differential Equations (PDEs)}\label{appendix:task_pde}
\begin{figure}[htbp] 
\centering 
\includegraphics[width=1.0\linewidth]{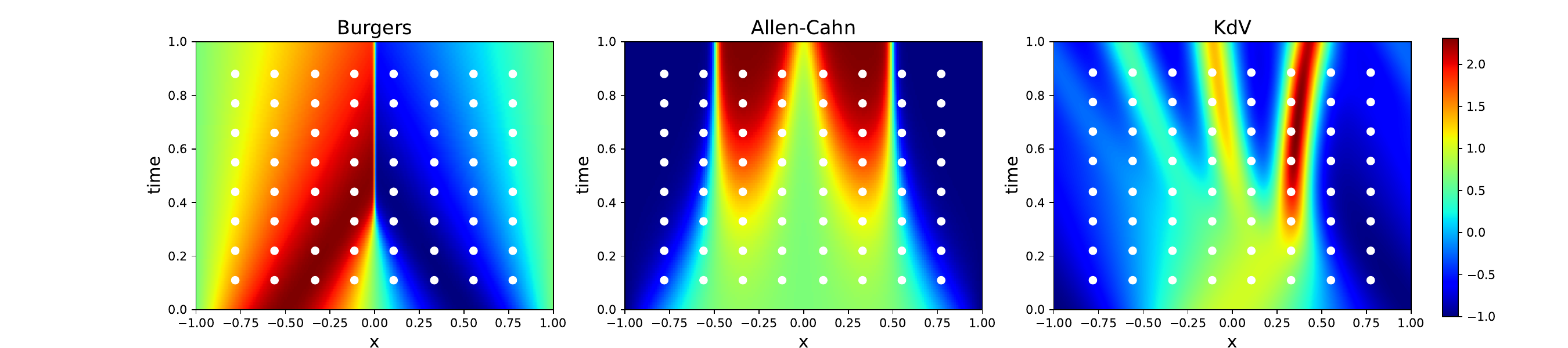}
\caption{Visualization of the 2D latent solutions for the Burgers, Allen-Cahn, and KdV equations. The observed data are sampled on an $8 \times 8$ grid, denoted by white points.}
\label{fig:pde_ground_truth}
\end{figure}

We conducted experiments on three non-linear PDEs, with the latent solutions visualized in \cref{fig:pde_ground_truth}. 
The PDE structures~\eqref{eq:burgers}~\eqref{eq:ac}~\eqref{eq:kdv} are assumed to be known while the PDE parameters $\nu$ in~\eqref{eq:burgers}~\eqref{eq:ac}~\eqref{eq:kdv} are assumed to be unknown.
For each equation, 64 observed data points are sampled on an $8 \times 8$ grid.
The objective is to predict the unknown PDE parameters using the observed data.
The predicted PDE parameters are evaluated by comparing the error with the ground truth, as well as the error in the corresponding prediction of the latent solution.

\subsubsection{Burgers Equation}
The nonlinear viscous Burgers equation is a pivotal partial differential equation arising in diverse domains of applied mathematics such as fluid mechanics, nonlinear acoustics, and traffic flow. This equation can be deduced from the Navier-Stokes equations for the velocity field by omitting the pressure gradient term. In our experiment, the equation along with Dirichlet boundary conditions, is expressed as follows:
\begin{equation}\label{eq:burgers}
\begin{aligned}
& u_t + uu_x - \nu u_{xx} = 0, \ x \in [-1,1], t\in [0,1], \nu > 0, \\
& u(0,x) = -\sin(\pi x), \\
& u(t,-1) = u(t,1) = 0, \\
\end{aligned}
\end{equation}
with actual viscosity $\nu = \frac{0.01}{\pi} \approx 0.0031831$. For PINN, following ~\cite{lu2021physics}, we enforced the initial condition into the output by choosing a surrogate model of the solution as 
$$
\hat{u}(x)=\left(1 - \exp(-t)\right) \text{NN}(x; \btheta) - \sin(\pi x),
$$
where $\text{NN}(x; \btheta)$ is a neural network. 


\subsubsection{Allen-Cahn Equation}
The Allen-Cahn equation is a reaction-diffusion equation of mathematical physics describing the process of phase separation in multi-component alloy systems, including order-disorder transitions. 
In our experiment, it is expressed as follows:
\begin{equation}\label{eq:ac}
\begin{aligned}
& u_t - \nu u_{xx} = 5(u - u^3), x \in [-1,1], t\in [0,1], \nu > 0, \\
& u(0,x) = x^2 \cos(\pi x), \\
& u(t,-1) = u(t,1) = -1, 
\end{aligned}
\end{equation}
with the actual diffusion coefficient $\nu = 0.001$. For PINN, we enforced the initial condition into the output by choosing a surrogate model of the solution as 
$$
\hat{u}(x)=\left(1 - \exp(-t)\right) \text{NN}(x; \btheta) + x^2\cos(\pi x),
$$
where $\text{NN}(x; \btheta)$ is a neural network.


\subsubsection{Korteweg–De Vries (KdV) Equation}
The Korteweg–de Vries (KdV) equation serves as a mathematical model for waves on shallow water surfaces. This equation is distinguished as a prototypical example of an integrable PDE. It is characterized by features typical of integrable systems, including a plethora of explicit solutions, notably soliton solutions, and an infinite number of conserved quantities. These properties are particularly noteworthy given the inherent nonlinearity of the equation, which generally complicates the solvability of PDEs. In specific, we consider:
\begin{equation}\label{eq:kdv}
\begin{aligned}
&u_t + uu_x + \nu u_{xxx} = 0, \\
&x \in [-1, 1], t\in[0,1], \nu \neq 0, \\
&u(0, x) = \cos(\pi x), 
\end{aligned}
\end{equation}
with the actual coefficient of dispersion $\nu$ equal to 0.0025. For PINN, we enforced the initial condition into the output by choosing a surrogate model of the solution as 
$$
\hat{u}(x)=\left(1 - \exp(-t)\right) \text{NN}(x; \btheta) + \cos(\pi x),
$$
where $\text{NN}(x; \btheta)$ is a neural network. 


\subsubsection{Numerical PDE solver}\label{appendix:numerical}
Numerical solvers play a critical role in the study and application of PDEs, enabling the simulation and analysis of complex physical phenomena that cannot be addressed analytically \cite{pinder2018numerical}. These solvers convert PDEs into a form that can be handled computationally, typically by discretizing the domain into a finite set of points or elements and approximating the derivatives. Conventional numerical methods include finite difference methods, finite element methods, and spectral methods \cite{ames2014numerical}.

Among the various numerical methods for solving PDEs, spectral methods stand out for their ability to deliver highly accurate solutions, particularly for problems with smooth solutions \cite{gottlieb1977numerical,canuto2007spectral}. Spectral methods involve representing the solution to a PDE as a sum of basis functions, such as trigonometric polynomials, which are globally defined over the domain. This approach contrasts with finite difference or finite element methods, where the solution is localized to the grid points or elements.
In this paper, we mainly adopt spectral methods, as we focus on the Burgers, Allen-Cahn, and KdV equations. All these three equations can be efficiently and accurately resolved by spectral techniques.

\section{Implementation Details}\label{appendix:implementation}

\subsection{Data Condensation}\label{appendix:implementation_data_condensation}

We conducted our experiments following the standard data condensation setting established by~\cite{distillation, dc_gradient_matching, dc_cafe}.
The condensation and evaluation are both performed on a depth-3 convolutional neural network~\cite{SagunEGDB18}.
The hyperparameters we used for \fgsu are summarized in~\cref{tab:data_condensation_hyperparameters}.
All experiments are conducted on NVIDIA-L40S (40G).

\begin{table}[ht]
\label{tab:data_condensation_hyperparameters}
\centering
\renewcommand\arraystretch{1.1}
\setlength{\abovecaptionskip}{0.1cm}
\setlength{\belowcaptionskip}{-0.2cm}
\resizebox{\linewidth}{!}{
\begin{tabular}{cccccccccc}
\toprule
\textbf{Datasets} & \multicolumn{3}{c}{\textbf{MNIST}} & \multicolumn{3}{c}{\textbf{CIFAR10}} & \multicolumn{3}{c}{\textbf{CIFAR100}} \\
\textbf{IPC}& 1 & 10 & 50 & 1 & 10 & 50 & 1 & 10 & 50 \\
\midrule
Unrolled Depth & 100 & 100 & 100 & 100 & 100 & 100 & 100 & 100 & 200 \\
\# Random Directions & 32 & 32 & 32 & 32 & 32 & 32 & 32 & 32 & 32 \\
\# Inner Batch Size & Full & Full & Full & Full & Full & Full & Full & Full & 100 \\
Hessian-Free Pretraining & \no & \yes & \yes & \no & \yes & \yes & \no & \yes & \yes \\
Gradient Accumulate & 16 & 32 & 32 & 32 & 32 & 32 & 64 & 64 & 64 \\
Outer Steps & 10000 & 10000 & 10000 & 10000 & 10000 & 10000 & 10000 & 10000 & 10000 \\
Outer Step Size & 1e-2 & 5e-4 & 5e-4 & 1e-2 & 5e-4 & 5e-4 & 1e-2 & 5e-4 & 5e-4 \\
Evaluation Steps  & 1000 & 10000 & 10000 & 1000 & 10000 & 10000 & 1000 & 10000 & 10000 \\
\bottomrule
\end{tabular}
}
\caption{\fgsu hyperparameters for data condensation experiments.}
\end{table}

\subsection{Meta Learning Online Adaptation of Language Models}\label{appendix:implementation_camels}

We adhered to the standard settings of CaMeLS~\cite{hu2023camels} and adapted their official code for our implementation.
The only modification made was replacing the meta gradient approximation module with \fgsu. 
It is important to note that the base models used for meta-learning were initially pre-trained on a split QA-paired set. 
While the official codebase provided the script for pretraining, it did not include the exact base model (weights) they used. 
We executed the official script to generate the pre-trained base models and observed that meta-learning performance is sensitive to the choice of base models. 
For a fair comparison, we reported both the results from \cite{tack2024mac} (where CaMeLS~\cite{hu2023camels} presented performance improvements over baselines using bar plots without specific metric values) and the results with our best custom pre-trained base models.
Following the two-phase training paradigm introduced in~\cref{sec:practical}, we performed training of \fgsu on RGU (DistilGPT2, unrolled depth 6) results.
The hyperparameters we used for \fgsu are summarized in~\cref{tab:camels_hyperparameters}, while all remaining hyperparameters were kept the same as in \cite{hu2023camels}.
All experiments are conducted on one NVIDIA A100 GPU (80G).

\begin{table}[h]
\label{tab:camels_hyperparameters}
\centering
\renewcommand\arraystretch{1.1}
\setlength{\abovecaptionskip}{0.1cm}
\setlength{\belowcaptionskip}{-0.2cm}
\begin{tabular}{ccc}
\toprule
base model & DistilGPT2 & GPT2\\
\midrule
Unrolled Depth & 24/48 & 24/48\\
\# of Random Directions & 12 & 8\\
Gradient Accumulate & 32 & 32\\
Outer Optimizer & Adam & Adam\\
Outer Step Size & 2.5E-6 & 2.5E-6\\
\bottomrule
\end{tabular}
\caption{\fgsu hyperparameters for CaMeLS experiments.}
\end{table}

\subsection{Data-driven Discovery of Partial Differential Equations (PDEs)}\label{appendix:implementation_pde}
The hyperparameters we used for this experiment are summarized in~\cref{tab:PDE_exp_hyperparameters}.
All experiments are conducted on NVIDIA-L40S (40G). The structure for PINN is a depth-9 and width-20 MLP with tanh activations.
\begin{table}[h]
\label{tab:PDE_exp_hyperparameters}
\centering
\footnotesize
\renewcommand\arraystretch{1.1}
\setlength{\abovecaptionskip}{0.1cm}
\setlength{\belowcaptionskip}{-0.2cm}
\resizebox{\columnwidth}{!}{\begin{tabular}{ccccccc}
\toprule
\textbf{PDEs} & \multicolumn{2}{c}{\textbf{Burgers}} & \multicolumn{2}{c}{\textbf{AllenCahn}} & \multicolumn{2}{c}{\textbf{KdV}} \\

\textbf{Inner Solvers} & PINN & Numerical & PINN & Numerical & PINN & Numerical \\
\midrule
Directional Grad. Calculation & FAD & ZO & FAD & ZO & FAD & ZO \\
\# Random Directions & 1 & 1 & 1 & 1 & 1 & 1 \\
Outer Steps & 5000 & 5000 & 5000 & 5000 & 5000 & 5000 \\
Unrolling Depth & 1000 & - & 1000 & - & 1000 & - \\
Grid Size for Numerical Method & - & 256$\times$512 & - & 256$\times$512 & - & 256$\times$512 \\
Range of initial $\nu$  & (0, 1e1] & (0, 1e1] & (0, 1e-1] & (0, 1e-1] & (0, 1e-2] & (0, 1e-2] \\
Outer Optimizer & Adam & Adam & Adam & Adam & Adam & Adam \\
Outer Step Size & 1e-2 & 1e-2 & 1e-2 & 1e-2 & 1e-3 & 1e-3 \\
Inner Optimizer & SGD & - & SGD & - & SGD & - \\
Inner Batch Size & 5000 & - & 5000 & - & 5000 & - \\
Inner Step Size & 1e-3 & - & 1e-3 & - & 1e-3 & - \\
$\mu$ for Finite Difference & - & 1e-4 & - & 1e-4 & - & 1e-4 \\
\bottomrule
\end{tabular}}
\caption{\fgsu hyperparameters for discovery of PDEs experiments. $\nu$ denotes the unknown PDE parameters. }
\end{table}

\section{Additional Experimental Results}\label{appendix:result}




\subsection{Meta Learning Online Adaptation of Language Models}\label{appendix:result_camels}

Ablation results on the unrolled depth and the base model are summarized in~\cref{tab:camels2} and~\cref{tab:camels3}.

\newpage

\begin{table}[htbp]
\centering
\footnotesize
\begin{tabular}{ccccccc}
    \toprule
    \multirow{2}{*}{\textbf{Model (\# params)}} & \multirow{2}{*}{\textbf{Method}} & \textbf{Unrolled} & \multicolumn{2}{c}{DistilGPT2} & \multicolumn{2}{c}{GPT2} \\ 
    & & \textbf{Steps (\#)}& EM ($\uparrow$) & F1 ($\uparrow$) & EM ($\uparrow$) & F1 ($\uparrow$) \\
    \midrule
    \multirow{3}{*}{DistilGPT2 (82M)} 
     & RGU (impl.) & 6 & 2.04 & 5.53 & \multicolumn{2}{c}{OOM}  \\
     & \multirow{2}{*}{\boldfgsu (ours)}
     & 24 & 2.10 & 5.59 & \textbf{2.22} & \textbf{6.37}\\
     & & 48 & 2.10 & 6.25 & 2.16 & 6.32\\
    \midrule
    \multirow{3}{*}{GPT2-Large (774M)} 
     & RGU (impl.) & 6 & 7.02 & 12.19 & \multicolumn{2}{c}{OOM} \\
     & \multirow{2}{*}{\boldfgsu (ours)}
     & 24 & 6.91 & 12.12 & 7.21 & \textbf{12.50}\\
     & & 48 & 7.03 & 12.31 & \textbf{7.27} & 12.45\\
     \midrule
    \multirow{3}{*}{GPT2-XL (1.5B)} 
     & RGU (impl.) & 6 & 7.93 & 12.94 & \multicolumn{2}{c}{OOM} \\
     & \multirow{2}{*}{\boldfgsu (ours)}
     & 24 & 8.34 & 13.46 & \textbf{8.89} & \textbf{14.42}\\
     & & 48 & 8.23 & 13.70 & 8.65 & 13.91\\
    \bottomrule
\end{tabular}
\caption{\textbf{StreamingQA}: Ablation results on the unrolled depth and the base
model.}
\label{tab:camels2}
\end{table}

\begin{table}[htbp]
\centering
\footnotesize
\begin{tabular}{ccccccc}
    \toprule
    \multirow{2}{*}{\textbf{Model (\# params)}} & \multirow{2}{*}{\textbf{Method}} & \textbf{Unrolled} & \multicolumn{2}{c}{DistilGPT2} & \multicolumn{2}{c}{GPT2} \\ 
    & & \textbf{Steps (\#)}& EM ($\uparrow$) & F1 ($\uparrow$) & EM ($\uparrow$) & F1 ($\uparrow$) \\
    \midrule
    \multirow{3}{*}{DistilGPT2 (82M)} 
     & RGU (impl.) & 6 & 1.52 & 3.16 & \multicolumn{2}{c}{OOM} \\
     & \multirow{2}{*}{\boldfgsu (ours)}
     & 24 & 1.72 & 3.49 & 1.72 & \textbf{3.50}\\
     & & 48 & \textbf{1.75} & 3.47 & 1.73 & 3.49\\
    \midrule
    \multirow{3}{*}{GPT2-Large (774M)} 
     & RGU (impl.) & 6 & 4.86 & 8.57 & \multicolumn{2}{c}{OOM} \\
     & \multirow{2}{*}{\boldfgsu (ours)}
     & 24 & 5.49 & 8.88 & \textbf{5.56} & \textbf{8.99}\\
     & & 48 & 5.45 & 8.90 & 5.32 & 8.97 \\
     \midrule
    \multirow{3}{*}{GPT2-XL (1.5B)} 
     & RGU (impl.) & 6 & 6.71 & 9.65 & \multicolumn{2}{c}{OOM} \\
     & \multirow{2}{*}{\boldfgsu (ours)}
     & 24 &  7.00 & 10.13 & 7.27 & 10.33\\
     & & 48 & \textbf{7.37} & \textbf{10.37}  & 7.25 & 10.32\\
    \bottomrule
\end{tabular}
\caption{\textbf{SQuAD}: Ablation results on the unrolled depth and the base
model.}
\label{tab:camels3}
\end{table}

\newpage
\newpage
\section*{NeurIPS Paper Checklist}

The checklist is designed to encourage best practices for responsible machine learning research, addressing issues of reproducibility, transparency, research ethics, and societal impact. Do not remove the checklist: {\bf The papers not including the checklist will be desk rejected.} The checklist should follow the references and follow the (optional) supplemental material.  The checklist does NOT count towards the page
limit. 

Please read the checklist guidelines carefully for information on how to answer these questions. For each question in the checklist:
\begin{itemize}
    \item You should answer \answerYes{}, \answerNo{}, or \answerNA{}.
    \item \answerNA{} means either that the question is Not Applicable for that particular paper or the relevant information is Not Available.
    \item Please provide a short (1–2 sentence) justification right after your answer (even for NA). 
\end{itemize}

{\bf The checklist answers are an integral part of your paper submission.} They are visible to the reviewers, area chairs, senior area chairs, and ethics reviewers. You will be asked to also include it (after eventual revisions) with the final version of your paper, and its final version will be published with the paper.

The reviewers of your paper will be asked to use the checklist as one of the factors in their evaluation. While "\answerYes{}" is generally preferable to "\answerNo{}", it is perfectly acceptable to answer "\answerNo{}" provided a proper justification is given (e.g., "error bars are not reported because it would be too computationally expensive" or "we were unable to find the license for the dataset we used"). In general, answering "\answerNo{}" or "\answerNA{}" is not grounds for rejection. While the questions are phrased in a binary way, we acknowledge that the true answer is often more nuanced, so please just use your best judgment and write a justification to elaborate. All supporting evidence can appear either in the main paper or the supplemental material, provided in appendix. If you answer \answerYes{} to a question, in the justification please point to the section(s) where related material for the question can be found.

IMPORTANT, please:
\begin{itemize}
    \item {\bf Delete this instruction block, but keep the section heading ``NeurIPS paper checklist"},
    \item  {\bf Keep the checklist subsection headings, questions/answers and guidelines below.}
    \item {\bf Do not modify the questions and only use the provided macros for your answers}.
\end{itemize}


\begin{enumerate}

\item {\bf Claims}
    \item[] Question: Do the main claims made in the abstract and introduction accurately reflect the paper's contributions and scope?
    \item[] Answer: \answerYes{} 
    \item[] Justification: The main claims made in the abstract and introduction accurately reflect the paper's contributions and scope.
    \item[] Guidelines:
    \begin{itemize}
        \item The answer NA means that the abstract and introduction do not include the claims made in the paper.
        \item The abstract and/or introduction should clearly state the claims made, including the contributions made in the paper and important assumptions and limitations. A No or NA answer to this question will not be perceived well by the reviewers. 
        \item The claims made should match theoretical and experimental results, and reflect how much the results can be expected to generalize to other settings. 
        \item It is fine to include aspirational goals as motivation as long as it is clear that these goals are not attained by the paper. 
    \end{itemize}

\item {\bf Limitations}
    \item[] Question: Does the paper discuss the limitations of the work performed by the authors?
    \item[] Answer: \answerYes{} 
    \item[] Justification: We discuss the limitations of the work in~\cref{sec:conclusion}.
    \item[] Guidelines:
    \begin{itemize}
        \item The answer NA means that the paper has no limitation while the answer No means that the paper has limitations, but those are not discussed in the paper. 
        \item The authors are encouraged to create a separate "Limitations" section in their paper.
        \item The paper should point out any strong assumptions and how robust the results are to violations of these assumptions (e.g., independence assumptions, noiseless settings, model well-specification, asymptotic approximations only holding locally). The authors should reflect on how these assumptions might be violated in practice and what the implications would be.
        \item The authors should reflect on the scope of the claims made, e.g., if the approach was only tested on a few datasets or with a few runs. In general, empirical results often depend on implicit assumptions, which should be articulated.
        \item The authors should reflect on the factors that influence the performance of the approach. For example, a facial recognition algorithm may perform poorly when image resolution is low or images are taken in low lighting. Or a speech-to-text system might not be used reliably to provide closed captions for online lectures because it fails to handle technical jargon.
        \item The authors should discuss the computational efficiency of the proposed algorithms and how they scale with dataset size.
        \item If applicable, the authors should discuss possible limitations of their approach to address problems of privacy and fairness.
        \item While the authors might fear that complete honesty about limitations might be used by reviewers as grounds for rejection, a worse outcome might be that reviewers discover limitations that aren't acknowledged in the paper. The authors should use their best judgment and recognize that individual actions in favor of transparency play an important role in developing norms that preserve the integrity of the community. Reviewers will be specifically instructed to not penalize honesty concerning limitations.
    \end{itemize}

\item {\bf Theory Assumptions and Proofs}
    \item[] Question: For each theoretical result, does the paper provide the full set of assumptions and a complete (and correct) proof?
    \item[] Answer: \answerYes{} 
    \item[] Justification: The assumptions are included in~\cref{sec:convergence}, and the complete proof is placed in~\cref{appendix:theory}.
    \item[] Guidelines:
    \begin{itemize}
        \item The answer NA means that the paper does not include theoretical results. 
        \item All the theorems, formulas, and proofs in the paper should be numbered and cross-referenced.
        \item All assumptions should be clearly stated or referenced in the statement of any theorems.
        \item The proofs can either appear in the main paper or the supplemental material, but if they appear in the supplemental material, the authors are encouraged to provide a short proof sketch to provide intuition. 
        \item Inversely, any informal proof provided in the core of the paper should be complemented by formal proofs provided in appendix or supplemental material.
        \item Theorems and Lemmas that the proof relies upon should be properly referenced. 
    \end{itemize}

    \item {\bf Experimental Result Reproducibility}
    \item[] Question: Does the paper fully disclose all the information needed to reproduce the main experimental results of the paper to the extent that it affects the main claims and/or conclusions of the paper (regardless of whether the code and data are provided or not)?
    \item[] Answer: \answerYes{} 
    \item[] Justification: To the best of our knowledge, all the information needed to reproduce the main experimental results are included in~\cref{sec:experiment},~\cref{appendix:detailed_task_description}, and~\cref{appendix:implementation}.
    \item[] Guidelines:
    \begin{itemize}
        \item The answer NA means that the paper does not include experiments.
        \item If the paper includes experiments, a No answer to this question will not be perceived well by the reviewers: Making the paper reproducible is important, regardless of whether the code and data are provided or not.
        \item If the contribution is a dataset and/or model, the authors should describe the steps taken to make their results reproducible or verifiable. 
        \item Depending on the contribution, reproducibility can be accomplished in various ways. For example, if the contribution is a novel architecture, describing the architecture fully might suffice, or if the contribution is a specific model and empirical evaluation, it may be necessary to either make it possible for others to replicate the model with the same dataset, or provide access to the model. In general. releasing code and data is often one good way to accomplish this, but reproducibility can also be provided via detailed instructions for how to replicate the results, access to a hosted model (e.g., in the case of a large language model), releasing of a model checkpoint, or other means that are appropriate to the research performed.
        \item While NeurIPS does not require releasing code, the conference does require all submissions to provide some reasonable avenue for reproducibility, which may depend on the nature of the contribution. For example
        \begin{enumerate}
            \item If the contribution is primarily a new algorithm, the paper should make it clear how to reproduce that algorithm.
            \item If the contribution is primarily a new model architecture, the paper should describe the architecture clearly and fully.
            \item If the contribution is a new model (e.g., a large language model), then there should either be a way to access this model for reproducing the results or a way to reproduce the model (e.g., with an open-source dataset or instructions for how to construct the dataset).
            \item We recognize that reproducibility may be tricky in some cases, in which case authors are welcome to describe the particular way they provide for reproducibility. In the case of closed-source models, it may be that access to the model is limited in some way (e.g., to registered users), but it should be possible for other researchers to have some path to reproducing or verifying the results.
        \end{enumerate}
    \end{itemize}

\item {\bf Open access to data and code}
    \item[] Question: Does the paper provide open access to the data and code, with sufficient instructions to faithfully reproduce the main experimental results, as described in supplemental material?
    \item[] Answer: \answerNo{} 
    \item[] Justification: The data used in this paper is public. We will release the code on the acceptance of the paper.
    \item[] Guidelines:
    \begin{itemize}
        \item The answer NA means that paper does not include experiments requiring code.
        \item Please see the NeurIPS code and data submission guidelines (\url{https://nips.cc/public/guides/CodeSubmissionPolicy}) for more details.
        \item While we encourage the release of code and data, we understand that this might not be possible, so “No” is an acceptable answer. Papers cannot be rejected simply for not including code, unless this is central to the contribution (e.g., for a new open-source benchmark).
        \item The instructions should contain the exact command and environment needed to run to reproduce the results. See the NeurIPS code and data submission guidelines (\url{https://nips.cc/public/guides/CodeSubmissionPolicy}) for more details.
        \item The authors should provide instructions on data access and preparation, including how to access the raw data, preprocessed data, intermediate data, and generated data, etc.
        \item The authors should provide scripts to reproduce all experimental results for the new proposed method and baselines. If only a subset of experiments are reproducible, they should state which ones are omitted from the script and why.
        \item At submission time, to preserve anonymity, the authors should release anonymized versions (if applicable).
        \item Providing as much information as possible in supplemental material (appended to the paper) is recommended, but including URLs to data and code is permitted.
    \end{itemize}

\item {\bf Experimental Setting/Details}
    \item[] Question: Does the paper specify all the training and test details (e.g., data splits, hyperparameters, how they were chosen, type of optimizer, etc.) necessary to understand the results?
    \item[] Answer: \answerYes{} 
    \item[] Justification: To the best of our knowledge, all necessary information has been included in~\cref{sec:experiment},~\cref{appendix:detailed_task_description}, and~\cref{appendix:implementation}.
    \item[] Guidelines:
    \begin{itemize}
        \item The answer NA means that the paper does not include experiments.
        \item The experimental setting should be presented in the core of the paper to a level of detail that is necessary to appreciate the results and make sense of them.
        \item The full details can be provided either with the code, in appendix, or as supplemental material.
    \end{itemize}

\item {\bf Experiment Statistical Significance}
    \item[] Question: Does the paper report error bars suitably and correctly defined or other appropriate information about the statistical significance of the experiments?
    \item[] Answer: \answerYes{} 
    \item[] Justification: We have reported the error bars in \textit{Data Condensation} experiments and \textit{Data-driven Discovery of PDE} experiments. For \textit{Meta Learning Online Adaptation of LM} experiments, we adhere to the standard evaluation protocol as employed in~\cite{hu2023camels, tack2024mac}, specifically, using the identical evaluation script with the same random seed, to ensure a fair comparison, hence, without error bars.
    \item[] Guidelines:
    \begin{itemize}
        \item The answer NA means that the paper does not include experiments.
        \item The authors should answer "Yes" if the results are accompanied by error bars, confidence intervals, or statistical significance tests, at least for the experiments that support the main claims of the paper.
        \item The factors of variability that the error bars are capturing should be clearly stated (for example, train/test split, initialization, random drawing of some parameter, or overall run with given experimental conditions).
        \item The method for calculating the error bars should be explained (closed form formula, call to a library function, bootstrap, etc.)
        \item The assumptions made should be given (e.g., Normally distributed errors).
        \item It should be clear whether the error bar is the standard deviation or the standard error of the mean.
        \item It is OK to report 1-sigma error bars, but one should state it. The authors should preferably report a 2-sigma error bar than state that they have a 96\% CI, if the hypothesis of Normality of errors is not verified.
        \item For asymmetric distributions, the authors should be careful not to show in tables or figures symmetric error bars that would yield results that are out of range (e.g. negative error rates).
        \item If error bars are reported in tables or plots, The authors should explain in the text how they were calculated and reference the corresponding figures or tables in the text.
    \end{itemize}

\item {\bf Experiments Compute Resources}
    \item[] Question: For each experiment, does the paper provide sufficient information on the computer resources (type of compute workers, memory, time of execution) needed to reproduce the experiments?
    \item[] Answer: \answerYes{} 
    \item[] Justification: We provide the information of computer resources in~\cref{appendix:implementation}.
    \item[] Guidelines:
    \begin{itemize}
        \item The answer NA means that the paper does not include experiments.
        \item The paper should indicate the type of compute workers CPU or GPU, internal cluster, or cloud provider, including relevant memory and storage.
        \item The paper should provide the amount of compute required for each of the individual experimental runs as well as estimate the total compute. 
        \item The paper should disclose whether the full research project required more compute than the experiments reported in the paper (e.g., preliminary or failed experiments that didn't make it into the paper). 
    \end{itemize}
    
\item {\bf Code Of Ethics}
    \item[] Question: Does the research conducted in the paper conform, in every respect, with the NeurIPS Code of Ethics \url{https://neurips.cc/public/EthicsGuidelines}?
    \item[] Answer: \answerYes{} 
    \item[] Justification: The research conducted in the paper conforms, in every respect, with the NeurIPS Code of Ethics.
    \item[] Guidelines:
    \begin{itemize}
        \item The answer NA means that the authors have not reviewed the NeurIPS Code of Ethics.
        \item If the authors answer No, they should explain the special circumstances that require a deviation from the Code of Ethics.
        \item The authors should make sure to preserve anonymity (e.g., if there is a special consideration due to laws or regulations in their jurisdiction).
    \end{itemize}

\item {\bf Broader Impacts}
    \item[] Question: Does the paper discuss both potential positive societal impacts and negative societal impacts of the work performed?
    \item[] Answer: \answerNA{} 
    \item[] Justification: The research presented in this paper focuses on fundamental algorithms rather than specific applications. Consequently, it is challenging to predict the potential social impacts of this work.
    \item[] Guidelines:
    \begin{itemize}
        \item The answer NA means that there is no societal impact of the work performed.
        \item If the authors answer NA or No, they should explain why their work has no societal impact or why the paper does not address societal impact.
        \item Examples of negative societal impacts include potential malicious or unintended uses (e.g., disinformation, generating fake profiles, surveillance), fairness considerations (e.g., deployment of technologies that could make decisions that unfairly impact specific groups), privacy considerations, and security considerations.
        \item The conference expects that many papers will be foundational research and not tied to particular applications, let alone deployments. However, if there is a direct path to any negative applications, the authors should point it out. For example, it is legitimate to point out that an improvement in the quality of generative models could be used to generate deepfakes for disinformation. On the other hand, it is not needed to point out that a generic algorithm for optimizing neural networks could enable people to train models that generate Deepfakes faster.
        \item The authors should consider possible harms that could arise when the technology is being used as intended and functioning correctly, harms that could arise when the technology is being used as intended but gives incorrect results, and harms following from (intentional or unintentional) misuse of the technology.
        \item If there are negative societal impacts, the authors could also discuss possible mitigation strategies (e.g., gated release of models, providing defenses in addition to attacks, mechanisms for monitoring misuse, mechanisms to monitor how a system learns from feedback over time, improving the efficiency and accessibility of ML).
    \end{itemize}
    
\item {\bf Safeguards}
    \item[] Question: Does the paper describe safeguards that have been put in place for responsible release of data or models that have a high risk for misuse (e.g., pretrained language models, image generators, or scraped datasets)?
    \item[] Answer: \answerNA{} 
    \item[] Justification: We do not have plans to release any new data or models in conjunction with this work.
    \item[] Guidelines:
    \begin{itemize}
        \item The answer NA means that the paper poses no such risks.
        \item Released models that have a high risk for misuse or dual-use should be released with necessary safeguards to allow for controlled use of the model, for example by requiring that users adhere to usage guidelines or restrictions to access the model or implementing safety filters. 
        \item Datasets that have been scraped from the Internet could pose safety risks. The authors should describe how they avoided releasing unsafe images.
        \item We recognize that providing effective safeguards is challenging, and many papers do not require this, but we encourage authors to take this into account and make a best faith effort.
    \end{itemize}

\item {\bf Licenses for existing assets}
    \item[] Question: Are the creators or original owners of assets (e.g., code, data, models), used in the paper, properly credited and are the license and terms of use explicitly mentioned and properly respected?
    \item[] Answer: \answerYes{} 
    \item[] Justification: All usages of the related assets are accompanied by formal citations and comply with the respective licensing terms and conditions.
    \item[] Guidelines:
    \begin{itemize}
        \item The answer NA means that the paper does not use existing assets.
        \item The authors should cite the original paper that produced the code package or dataset.
        \item The authors should state which version of the asset is used and, if possible, include a URL.
        \item The name of the license (e.g., CC-BY 4.0) should be included for each asset.
        \item For scraped data from a particular source (e.g., website), the copyright and terms of service of that source should be provided.
        \item If assets are released, the license, copyright information, and terms of use in the package should be provided. For popular datasets, \url{paperswithcode.com/datasets} has curated licenses for some datasets. Their licensing guide can help determine the license of a dataset.
        \item For existing datasets that are re-packaged, both the original license and the license of the derived asset (if it has changed) should be provided.
        \item If this information is not available online, the authors are encouraged to reach out to the asset's creators.
    \end{itemize}

\item {\bf New Assets}
    \item[] Question: Are new assets introduced in the paper well documented and is the documentation provided alongside the assets?
    \item[] Answer: \answerNA{} 
    \item[] Justification: We do not have plans to release any new assets or models in conjunction with this work. The code, which will be released upon acceptance, will be well documented, and the documentation will also be made publicly available.
    \item[] Guidelines:
    \begin{itemize}
        \item The answer NA means that the paper does not release new assets.
        \item Researchers should communicate the details of the dataset/code/model as part of their submissions via structured templates. This includes details about training, license, limitations, etc. 
        \item The paper should discuss whether and how consent was obtained from people whose asset is used.
        \item At submission time, remember to anonymize your assets (if applicable). You can either create an anonymized URL or include an anonymized zip file.
    \end{itemize}

\item {\bf Crowdsourcing and Research with Human Subjects}
    \item[] Question: For crowdsourcing experiments and research with human subjects, does the paper include the full text of instructions given to participants and screenshots, if applicable, as well as details about compensation (if any)? 
    \item[] Answer: \answerNA{} 
    \item[] Justification: The paper does not involve crowdsourcing nor research with human subjects.
    \item[] Guidelines:
    \begin{itemize}
        \item The answer NA means that the paper does not involve crowdsourcing nor research with human subjects.
        \item Including this information in the supplemental material is fine, but if the main contribution of the paper involves human subjects, then as much detail as possible should be included in the main paper. 
        \item According to the NeurIPS Code of Ethics, workers involved in data collection, curation, or other labor should be paid at least the minimum wage in the country of the data collector. 
    \end{itemize}

\item {\bf Institutional Review Board (IRB) Approvals or Equivalent for Research with Human Subjects}
    \item[] Question: Does the paper describe potential risks incurred by study participants, whether such risks were disclosed to the subjects, and whether Institutional Review Board (IRB) approvals (or an equivalent approval/review based on the requirements of your country or institution) were obtained?
    \item[] Answer: \answerNA{} 
    \item[] Justification: The paper does not involve crowdsourcing nor research with human subjects
    \item[] Guidelines:
    \begin{itemize}
        \item The answer NA means that the paper does not involve crowdsourcing nor research with human subjects.
        \item Depending on the country in which research is conducted, IRB approval (or equivalent) may be required for any human subjects research. If you obtained IRB approval, you should clearly state this in the paper. 
        \item We recognize that the procedures for this may vary significantly between institutions and locations, and we expect authors to adhere to the NeurIPS Code of Ethics and the guidelines for their institution. 
        \item For initial submissions, do not include any information that would break anonymity (if applicable), such as the institution conducting the review.
    \end{itemize}

\end{enumerate}

\end{document}